\def\QQ{\ensuremath{\mathbb{Q}}}
\def\RR{\ensuremath{\mathbb{R}}}
\newcommand{\CC}{{\mathbb C}}
\newcommand{\PP}{{\mathbb P}}
\newcommand{\rank}{\textup{rank}}
\newtheorem{theorem}{Theorem}
\newtheorem{definition}[theorem]{Definition}
\newtheorem{lemma}[theorem]{Lemma}
\newtheorem{remark}[theorem]{Remark}
\newtheorem{example}[theorem]{Example}
\newtheorem{corollary}[theorem]{Corollary}
\newtheorem{question}[theorem]{Question}
\newcommand{\FF}{{\mathbb{F}}}
\newcommand{\R}{{\mathbb{R}}}
\newcommand{\dis}{\displaystyle}
\newcommand{\wh}{\widehat}
\newcommand{\f}{\frac}
\newcommand{\ra}{\rightarrow}
\newcommand{\degree}{\text{degree}}
\title{Certifying the Existence of Epipolar Matrices}
\author[Agarwal]{Sameer Agarwal}
\address{Google Inc.}
\email{sameeragarwal@google.com}
\author[Lee]{Hon-Leung Lee}
\address{Department of Mathematics, University of Washington, Seattle, WA 98195}
\email{hllee@uw.edu}
\author[Sturmfels]{Bernd Sturmfels}
\address{Department of Mathematics, University of California, Berkeley, CA 94720}
\email{bernd@berkeley.edu}
\author[Thomas]{Rekha R. Thomas}
\address{Department of Mathematics,  University of Washington, Seattle, WA 98195}
\email{rrthomas@uw.edu}
\thanks{Lee and Thomas were partially supported by NSF grants DMS-1115293 and DMS-1418728. Sturmfels was partially supported by NSF grant DMS-0968882.}
\begin{document}

\begin{abstract}
Given a set of point correspondences in two images, the existence of a fundamental matrix is a necessary condition for the points to be the images of a 3-dimensional scene imaged with two pinhole cameras. If the camera calibration is known then one
requires the existence of an essential matrix.

We present an efficient algorithm, using exact linear algebra,
for testing the existence of a fundamental matrix.
The input is  any number of point correspondences.
For essential matrices, we characterize  the solvability of the
Demazure polynomials.
In both scenarios, we determine
which linear subspaces  intersect
 a fixed set defined by non-linear polynomials.
The  conditions we derive are polynomials stated purely in terms of image coordinates.
They represent a new class of two-view invariants, free of fundamental (resp.~essential)~matrices.
\end{abstract}

\maketitle

\section{Introduction}
\label{sec:introduction}

Computer vision motivates
the following basic question in multiview geometry:

\begin{question}
\label{question1}
 Given a set of point correspondences $\{ (x_i, y_i) \in \RR^2 \times \RR^2, \,\,i=1,\ldots, m \}$, are these the (corresponding) images  in two pinhole cameras of $m$ points in $\R^3$?
\end{question}

Depending on whether we know the intrinsic calibration of the cameras or not, the classic answer to this question is a necessary condition in terms of either the {\em essential matrix}~\cite{longuethiggins81} or the {\em fundamental matrix}~\cite{faugeras-92,faugeras-luong-maybank-92,hartley-uncalibrated-relative-92, hartley-uncalibrated-stereo-92} and it goes as follows:

A set of point correspondences $\{ (x_i, y_i) \in \RR^2 \times \RR^2, \,\,i=1,\ldots, m \}$
comes from $m$ points in $\R^3$ with two uncalibrated (resp.~calibrated) cameras
   only if there exists a fundamental matrix $F$
   (resp.~essential matrix $E$)  such that the $(x_i,y_i)$
           satisfy the {\em epipolar constraints}~\cite[Chapter 9]{hartley-zisserman-2003}. A fundamental matrix $F$ is any real $3 \times 3$ matrix of rank two,
         and an essential matrix $E$ satisfies nine additional cubic {\em Demazure polynomials}. Under mild genericity conditions on the point correspondences, the existence of these matrices is also sufficient for
the correspondences $(x_i,y_i)$ to be the images of a 3D scene.

This however is not a satisfying answer, as it has replaced one existence question about a
three-dimensional scene and pair of pinhole cameras with another about the existence of a $3 \times 3$ matrix. Thus, to answer Question~\ref{question1}, we now have to answer:
\begin{question}
\label{question2}
Given a set of $m$ point correspondences $(x_i,y_i)$ in $ \RR^2 \times \RR^2$,
does there exist a fundamental (essential) matrix relating them via the epipolar constraints?
\end{question}

The answer to Question~\ref{question2} is known to varying degrees in several special cases,
but a complete answer has remained elusive.
For example, in the calibrated situation, when $m=5$, there exists up to 10 distinct and possibly all
complex essential matrices~\cite{demazure}. In the uncalibrated scenario, when $m\le7$ there always exists a real rank deficient matrix satisfying the epipolar
constraints~\cite{hartley1994projective,sturm1869problem}. This matrix
is not guaranteed to have rank two and therefore may not be a fundamental matrix. Thus, the so-called {\em Seven Point Algorithm} is not guaranteed to return a fundamental matrix as a solution.
%Further, the popular belief that for $m < 7$ there exists an infinite number of fundamental matrices relating a point set is not true either.
Another result
for $m=7$, by Chum  {\em et al.}~\cite{matas-et-al-2005}, states that
 if five or more points are related by a homography, then there always exists a fundamental matrix for all of them.

In this paper we give a complete answer to Question~\ref{question2} for fundamental matrices:
we determine the exact conditions
on $(x_1,y_1),\ldots,(x_m,y_m)$ under which a real rank two matrix satisfying the epipolar constraints exists.
They yield an efficient algorithm that takes the point pairs as input and verifies whether they have a fundamental matrix. To our knowledge such a direct test on point pairs does not exist in the literature. The usual method is to use the point pairs to first find a fundamental matrix by solving the epipolar constraints for a rank deficient matrix. This involves non-linear algebra and methods to solve polynomial systems. Our check only needs the point pairs and linear algebra.

For essential matrices, our answer is conditioned on the rank of a certain $m \times 9$ data matrix $Z$. If $\rank(Z) \le 3$ or $\rank(Z) \ge 7$ we give precise conditions for the existence of a real essential matrix. If $ 4 \le \rank(Z) \le 6$~we
characterize, for all values of $m$, when the Demazure polynomials~\cite{demazure}
 have a (possibly complex) solution.

The key technical task is to establish conditions for the intersection of a linear subspace with a fixed set that is described in terms of polynomials.
That fixed set is the set of  real rank two matrices (in the uncalibrated case),
or the  variety of essential matrices (in the calibrated case).
  The existence conditions we derive are in terms of polynomials in the $2m$ coordinates of
 $ (x_1, y_1), \ldots, (x_m, y_m)$. These polynomials  represent a new class of two-view invariants, free of fundamental (essential) matrices.

In the remainder of this section we establish our notation and some basic facts about cameras and projective varieties. Section~\ref{sec:fundamental} considers the existence problem for the fundamental matrix and Section~\ref{sec:essential} does so for the essential matrix. We conclude in Section~\ref{sec:discussion} with a discussion of the results and directions for future work.

\subsection{Notation}
\label{sec:notation}
Capital roman letters (say $E, F, X, Y, Z$) denote matrices, and
upper case calligraphic letters denote sets (say $\mathcal{E},
\mathcal{F}$). For a matrix $F$, the corresponding lower case letter
$f$ denotes the vector obtained by concatenating the rows of $F$.
All polynomials have coefficients in a field $\FF$, which is usually either $\RR$ or $\CC$.
The projective space $\PP^n_\FF$ is $\FF^{n+1}$ in which we identify $u$ and $v$ if $u = \lambda v$ for some $\lambda\in \FF\setminus\{0\}$. For example $(1,2,3)$ and $(4,8,12)$ are the same
point   in $\PP^2_\RR$, denoted as $(1,2,3) \sim (4,8,12)$.
The set of $m\times n$ matrices with entries in $\FF$ is denoted by $\FF^{m\times n}$. The symbol $A^\dagger$ denotes the Moore-Penrose pseudo-inverse of $A$. For $v \in \RR^3$,
\begin{align}
[v]_\times \,\,:= \,\,
\begin{bmatrix}
0 & -v_3 & v_2\\
v_3 & 0 & -v_1\\
-v_2 & v_1 & 0
\end{bmatrix}
\end{align}
is a skew-symmetric matrix whose rank is two unless $v=0$. Also, $[v]_\times w = v \times w$, where $\times$ denotes the vector cross product.

We use $[m]$ to denote the set $\{1, \ldots, m\}$.
For $A\in \FF^{m\times n}$, we have ${\rm ker}_\FF (A) = \{u \in \FF^n: A u = 0\}$,
and ${\rm rank}(A) =  n - \dim({\rm ker}_\FF (A))$. We use $\det(A)$ to denote the determinant of $A$.
Let $q_{ij}(A)$ denote the
$2 \times 2$ minor of the $3 \times 3$ matrix $A$, obtained by dropping the $i$-th row and $j$-th column.
Points $x_i$ and $ y_i$  in $\RR^2$ will be identified with their homogenizations
$(x_{i1},x_{i2},1)^\top$ and $(y_{i1},y_{i2},1)^\top $ in $ \PP_\RR^2$.
Also, we write
$y_i^\top \otimes x_i^\top := \begin{bmatrix}
y_{i1}x_{i1} & y_{i1}x_{i2} & y_{i1} & y_{i2}x_{i1} & y_{i2}x_{i2} & y_{i2} & x_{i1} & x_{i2} & 1\end{bmatrix}\in \RR^{1\times 9}$.

\subsection{Camera Matrices}
A pinhole camera can be modeled by a real $3 \times 4$ matrix $P$ defined up to scale with $\rank(P)=3$. Partitioning a camera as $P=\begin{bmatrix} A & b \end{bmatrix}$ where $A \in
\RR^{3 \times 3}$, we say that $P$ is a {\em finite camera} if $A$ is
nonsingular, and an {\em infinite camera} otherwise.  In this paper we
restrict ourselves to finite cameras.
The unique point in  $ \PP_\RR^3$ represented by a generator $c \in \R^4$ of $\ker_\RR(P)$ is called the {\em center} of the camera $P$.
 For a finite camera, the center is
$c  = ((-A^{-1}b)^\top, 1)^\top \in\PP_\RR^3$. Two cameras $P_1$ and $P_2$ are {\em coincident} if their centers coincide; else they are {\em non-coincident}. A camera $P$
defines a linear projection
$\PP_\RR^3 \backslash \{c\} \ra \PP_\RR^2$ given by $w\mapsto Pw$. The point $Pw \in \PP_\RR^2$ is the {\em image} of $w$ in the camera $P$. By de-homogenizing
(i.e.~scaling the last coordinate to be $1$),
 we can view the image $Pw$ as a point in~$\RR^2$.

A finite camera $P$ can be written as $P = K\begin{bmatrix} R & t\end{bmatrix}$,
where $t \in \RR^3$,
 $K$ is an upper triangular matrix with positive diagonal entries known as the {\em calibration matrix}, and $R \in SO(3)$ is a rotation matrix that represents the orientation of the camera coordinate frame.
    If the calibration matrix $K$ is known, then the camera is
    said to be {\em calibrated}, and otherwise the camera is {\em uncalibrated}. The {\em normalization} of a calibrated camera $P = K\begin{bmatrix} R & t\end{bmatrix}$ is the camera $K^{-1}P = \begin{bmatrix} R & t\end{bmatrix}$. If $Pw = x$ is the image of $w$ in the calibrated camera $P$, then $K^{-1}x$ is called the {\em normalized image} of $w$, or equivalently, it is the image of $w$ in the normalized camera $K^{-1}P$. Thus if a camera $P$ is calibrated we can remove the effect of the known calibration $K$ by passing to the normalized camera $K^{-1}P$ and normalized images $K^{-1}x$.

\subsection{Epipolar Matrices}
\label{sec:epipolar}

 In this paper we use the name {\em epipolar matrix} to refer to either a {\em fundamental matrix} or {\em essential matrix}
derived from the {\em epipolar geometry} of a pair of cameras. These matrices are explained and studied in \cite[Chapter 9]{hartley-zisserman-2003}.

An {\em essential matrix} is any $3 \times 3$ real matrix of the form $E = SR$ where $S$ is a skew-symmetric matrix and $R \in SO(3)$. Essential matrices are characterized by the property that they have rank two (and hence one zero singular value) and two equal non-zero singular values. An essential matrix depends on six parameters,
three each from $S$ and $R$, but since it is only defined up to scale, it has five degrees of freedom.

The essential matrix of the two normalized cameras $\begin{bmatrix} I & 0 \end{bmatrix}$ and $\begin{bmatrix} R & t \end{bmatrix}$
is $E = [t]_\times R$.
For every pair of images $\tilde{x}$ and $\tilde{y}$ in these cameras of a point $w \in \PP_\RR^3$, the triple $(\tilde{x}, \tilde{y}, E)$ satisfies the {\em epipolar constraint}
\begin{align} \label{eq:epipolar}
\tilde{y}^\top E \tilde{x} \,\,=\,\, 0.
\end{align}
Further, any $E=SR$ is the essential matrix of a pair of cameras as shown in \cite[Section 9.6.2]{hartley-zisserman-2003}.

If the calibrations $K_1$ and $K_2$
of the two cameras were not known, then for a pair of corresponding images $(x,y)$ in the two cameras, the epipolar constraint becomes
$$  0 \,=\, \tilde{y}^\top E \tilde{x} \,=\, y^\top K_2^{-\top} E K_1^{-1} x.$$
The matrix $F := K_2^{-\top} E K_1^{-1}$
 is the {\em fundamental matrix} of the two uncalibrated cameras. This is a rank two matrix but its two non-zero singular values are no longer equal.
 Conversely, any real $3 \times 3$ matrix of rank two is the fundamental matrix of a pair of cameras \cite[Section 9.2]{hartley-zisserman-2003}. A fundamental matrix has seven degrees of freedom since it satisfies that rank two condition and is only defined up to scale.

Chapters 9 and 17 in \cite{hartley-zisserman-2003} offer other definitions of fundamental matrices. For example, consider any pair of non-coincident uncalibrated cameras $P_1$ and $P_2$ with  centers $c_1$ and $c_2$.
Then \cite[Equation 9.1]{hartley-zisserman-2003} says that
their fundamental matrix is
\begin{align}
F \,\,=\,\, [P_2c_1]_{\times} P_2 P^{\dagger}_1.
\end{align}
It is also mentioned in \cite[Section 9.6.2]{hartley-zisserman-2003} that the set of fundamental matrices can be parametrized as
 $F = [b]_\times H$, where $b$ is a non-zero vector and $H$ is an invertible matrix $3 \times 3$ matrix. We will use this in Section~\ref{sec:fundamental}.

\subsection{Projective Varieties}
\label{sec:projective}
We recall some basic notions from algebraic geometry
  \cite{cox2007ideals,eisenbud1995commutative}. Let $\FF[u] = \FF[u_1,\ldots,u_n]$ denote the ring of all polynomials with coefficients in $\FF$.
\begin{definition}[Homogeneous Polynomial]
A polynomial in $\FF[u]$  is homogeneous (or called a {\em form}) if all its monomials have the same total degree.
\end{definition}
For example, $u_1^2 u_2 + u_1 u_2^2$ is a form of degree three but $u_1^3 + u_2$ is not a form.

\begin{definition}[Projective Variety and Subvariety]
A subset $\mathcal{V} \subseteq \PP_\FF^{n}$ is a projective variety if there are homogeneous polynomials $h_1,\ldots,h_t \in \FF[u]$ such that
$\mathcal{V} = \{u \in \PP^{n}_\FF: h_1(u) = \ldots = h_t(u) = 0\}$.  We say that $\mathcal{V}$ is cut out by the polynomials $h_1, \ldots, h_t$.
A variety $\mathcal{V}_1$ is a subvariety of $\mathcal{V}$ if $\mathcal{V}_1 \subset \mathcal{V}$.
\end{definition}

Given homogeneous polynomials $h_1,\ldots,h_t \in \RR[u]$, we use the notation $\mathcal{V}_\CC := \{ u \in \PP_\CC^n \,:\, h_i(u)=0 \,\,\hbox{for} \,\,i=1,\ldots,t \}$ for their projective
variety over the complex numbers, and $\mathcal{V}_\RR := \mathcal{V}_\CC \cap \PP_\RR^n$ for the
set of real points in $\mathcal{V}_\CC$.

\begin{definition}[Irreducibility]
A projective variety $\mathcal{V} \subseteq \PP^{n}_\CC$ is irreducible if it is not the union of two nonempty proper subvarieties.
\end{definition}

\begin{definition}[Dimension]
The dimension $\dim (\mathcal{V})$ of a projective variety $\mathcal{V}$ is
the largest number $d$ such that there is a chain
$
\emptyset \neq \mathcal{V}_0 \subsetneq \mathcal{V}_1 \subsetneq \cdots \subsetneq \mathcal{V}_d \subseteq \mathcal{V}
$
 of irreducible  subvarieties $\mathcal{V}_i$ of $\mathcal{V}$.

\end{definition}

The following result is standard and crucial to this paper:
\begin{theorem} \label{thm:intersections}
 Suppose $\mathcal{V}, \mathcal{W}\subseteq \PP^{n}_\CC$ are two irreducible projective varieties. If $\dim(\mathcal{V}) + \dim(\mathcal{W}) = n$ then $\mathcal{V}\cap \mathcal{W} \neq \emptyset$.
If $\dim(\mathcal{V}) + \dim(\mathcal{W}) > n$ then $\mathcal{V}$ intersects $\mathcal{W}$ at infinitely many points. These statements may fail over the field $\RR$ of real numbers.
\end{theorem}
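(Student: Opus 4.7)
The plan is to reduce the projective statement to the standard affine intersection theorem via affine cones. Given a projective variety $\mathcal{V} \subseteq \PP^n_\CC$ cut out by homogeneous polynomials $h_1,\ldots,h_t \in \CC[u_0,\ldots,u_n]$, its affine cone is $C(\mathcal{V}) := \{u \in \CC^{n+1} : h_1(u) = \cdots = h_t(u) = 0\}$. A standard fact is that $\mathcal{V}$ is irreducible if and only if $C(\mathcal{V})$ is irreducible, and in that case $\dim C(\mathcal{V}) = \dim(\mathcal{V}) + 1$. Moreover, $0 \in C(\mathcal{V}) \cap C(\mathcal{W})$ always, and the nonzero points of $C(\mathcal{V}) \cap C(\mathcal{W})$ are exactly the representatives of points in $\mathcal{V} \cap \mathcal{W}$.

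The key ingredient I would invoke is the \emph{affine dimension theorem}: if $X, Y \subseteq \CC^N$ are irreducible affine varieties and $X \cap Y$ is nonempty, then every irreducible component of $X \cap Y$ has dimension at least $\dim(X) + \dim(Y) - N$. This is a consequence of Krull's Hauptidealsatz applied iteratively to the defining equations of $Y$ (or $X$), and can be found in \cite{eisenbud1995commutative}. Applying this with $N = n+1$, $X = C(\mathcal{V})$, $Y = C(\mathcal{W})$, and using that the intersection contains $0$, I conclude that every component of $C(\mathcal{V}) \cap C(\mathcal{W})$ through $0$ has dimension at least
\[
\dim(\mathcal{V}) + 1 + \dim(\mathcal{W}) + 1 - (n+1) \,=\, \dim(\mathcal{V}) + \dim(\mathcal{W}) - n + 1.
\]

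From here the two claims follow quickly. If $\dim(\mathcal{V}) + \dim(\mathcal{W}) = n$, the component of $C(\mathcal{V}) \cap C(\mathcal{W})$ containing $0$ has dimension at least $1$, so it strictly contains $\{0\}$ and provides a nonzero vector representing a point of $\mathcal{V} \cap \mathcal{W}$. If $\dim(\mathcal{V}) + \dim(\mathcal{W}) > n$, that same component has affine dimension at least $2$; removing the origin and projectivizing yields a projective subvariety of $\mathcal{V} \cap \mathcal{W}$ of dimension at least $1$, which is an infinite set since $\CC$ is infinite. Finally, to see that both parts may fail over $\RR$, I would exhibit the conic $\mathcal{V} = \{u_0^2 + u_1^2 + u_2^2 = 0\} \subseteq \PP^2_\RR$, whose only point over $\RR$ would have to have all coordinates zero (hence is empty in $\PP^2_\RR$), together with $\mathcal{W}$ any real point of $\PP^2_\RR$; their dimensions add to $1 + 0 = 2$, yet the real intersection is empty.

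The one nontrivial input is the affine dimension theorem itself; the rest is bookkeeping about cones. This is why the paper states Theorem~\ref{thm:intersections} as ``standard'' and cites the algebraic geometry literature rather than proving it, and why I would structure my proof to isolate that single black-box appeal rather than redoing the Hauptidealsatz argument.
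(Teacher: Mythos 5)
The core argument is correct and is the standard textbook proof of the projective dimension theorem (see, e.g., Hartshorne's \textit{Algebraic Geometry}, Theorem~I.7.2): pass to affine cones, apply the affine dimension theorem (a consequence of Krull's Hauptidealsatz) at the common point $0$, and note that any component of $C(\mathcal{V})\cap C(\mathcal{W})$ through $0$ is itself a cone (ideals of cones are homogeneous, and minimal primes over homogeneous ideals are homogeneous), so that a cone of affine dimension $\ge 1$ (resp.\ $\ge 2$) projectivizes to a nonempty (resp.\ positive-dimensional, hence infinite) subset of $\mathcal{V}\cap\mathcal{W}$. The paper gives no proof of this theorem, merely citing standard references, so there is no ``paper proof'' to compare against; your route is the expected one.

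The counterexample over $\RR$, however, does not work. You take $\mathcal{V}=\{u_0^2+u_1^2+u_2^2=0\}\subseteq\PP^2_\RR$ and $\mathcal{W}$ a single real point. First, $\mathcal{V}$ is the \emph{empty} subset of $\PP^2_\RR$, so as a real projective variety it has no well-defined positive dimension --- you are silently assigning it the dimension of its complexification, which is not what the statement ``may fail over $\RR$'' requires. Second, even with that convention the arithmetic is off: $1+0=1\ne 2=n$, so the hypothesis $\dim(\mathcal{V})+\dim(\mathcal{W})=n$ is not met and there is nothing to refute. A correct counterexample must use \emph{nonempty} real varieties whose real dimensions do sum to $n$: for instance, in $\PP^2_\RR$ take the line $\mathcal{V}=\{u_0=0\}$ and the conic $\mathcal{W}=\{u_1^2+u_2^2-u_0^2=0\}$; both are nonempty of real dimension $1$, and $1+1=2=n$, yet on the line $u_0=0$ the conic equation becomes $u_1^2+u_2^2=0$, which has no nonzero real solution, so $\mathcal{V}\cap\mathcal{W}=\emptyset$ over $\RR$ (while over $\CC$ they meet in two points, as Theorem~\ref{thm:intersections} predicts). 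Two disjoint concentric real circles in $\PP^2_\RR$ would serve equally well.
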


For any $A\in \CC^{m\times (n+1)}$,
the linear space $\ker_\CC (A) = \{u \in \PP^{n}_\CC: Au = 0\}$ can be viewed as an irreducible projective variety. Its dimension is $n-{\rm rank}(A)$. This is one less than the dimension of $\ker_\CC (A)$ as a vector space. For example, suppose $A\in \RR^{7\times 9}$ has rank seven. Then as a projective variety, the dimension of  $\ker_\CC (A)$ equals one.  If $\mathcal{V}\subseteq \PP^8_\CC$ is a projective variety of dimension seven, then  $\ker_\CC (A)\cap \mathcal{V} \neq \emptyset$.

\begin{definition}[Degree]
The  $\degree (\mathcal{V})$ of a  projective variety $\mathcal{V} \subseteq \PP^{n}_\CC$ is the number of intersection points with a generic linear subspace of dimension $n- \dim(\mathcal{V})$.
\end{definition}

Given the homogeneous polynomials $h_i$ that cut out a projective variety $\mathcal{V}$,
we can compute ${\rm dim}(\mathcal{V})$, ${\rm degree}(\mathcal{V})$,
and irreducible components, using Gr\"obner bases~\cite{cox2007ideals}.

\section{Fundamental Matrices}
\label{sec:fundamental}

Following Section \ref{sec:epipolar},
a fundamental matrix is any real $3 \times 3$ matrix of rank two defined up to scale~\cite[Section 9.2.4]{hartley-zisserman-2003}.  We denote the set of
fundamental matrices~by
$$ \mathcal{F} \,\,:= \,\,\{ f \in \PP_\RR^8 \,:\, \rank(F) = 2 \}.$$
Recall that in our notation, the vector $f$ is
the concatenation of the rows of the matrix $F$.
Let $\mathcal{R}_1 := \{a \in \PP_\CC^8 \,:\, \rank(A) \leq 1 \}$ be the set of complex rank one matrices, and $\mathcal{R}_2 := \{ a\in \PP_\CC^8 \,:\, \rank(A) \leq 2 \}$ the set of complex matrices of rank
at most two, up to scale. These are irreducible projective varieties in $\PP_\CC^8$, with
$$\dim(\mathcal{R}_2) = 7, \,\,\degree(\mathcal{R}_2)=3
\quad \textup{ and } \quad \dim(\mathcal{R}_1) = 4, \,\,\degree(\mathcal{R}_1)=6.$$
Note that $\mathcal{R}_1 = \{ a \in \PP_\CC^8 \,:\, q_{ij}(A) = 0, \,\,1 \leq i,j \leq 3\}$ where the $q_{ij}$ are the $2 \times 2$ minors of a $3 \times 3$ matrix, and
$\mathcal{R}_2 = \{ a \in \PP_\CC^8 \,:\, \det(A) = 0 \}$.
On the other hand,
$\mathcal{F} = (\mathcal{R}_2 \backslash \mathcal{R}_1) \cap \PP_\RR^8$ is not a variety, but
only a {\em quasi-projective} variety over  $\RR$.

Suppose we are given $m$ homogenized point correspondences $\{ (x_i, y_i), \,\,i=1,\ldots,m\}
\subset \PP_\RR^2 \times \PP_\RR^2$. We represent these data by
  three matrices with $m$ rows:
  \begin{align} \label{3matrices}
X = \begin{bmatrix}
x_1^\top\\
\vdots\\
x_m^\top
\end{bmatrix} \in \RR^{m \times 3}, \
Y = \begin{bmatrix}
y_1^\top\\
\vdots\\
y_m^\top
\end{bmatrix} \in \RR^{m \times 3},\ \textup{and }
Z = \begin{bmatrix}
y_1^\top \otimes x_1^\top\\
\vdots\\
y_m^\top \otimes x_m^\top
\end{bmatrix} \in \RR^{m \times 9}.
\end{align}
This notation allows us to write the epipolar constraints (\ref{eq:epipolar})
as $Zf = 0$.
Hence a fundamental matrix $F$ exists for the
$m$ given point correspondences
  if and only if the linear subspace
 $\ker_\RR(Z)$ intersects the quasi-projective variety $\mathcal{F}$.
In symbols,
\begin{align} \label{eq:exists F}
\ker_\RR(Z) \cap \mathcal{F} \neq \emptyset.
\end{align}
We will characterize the conditions on
the $m \times 9$-matrix $Z$ under which~\eqref{eq:exists F} holds.

If $\rank(Z) = 9$, then $\ker_\RR(Z)\subset \PP_\RR^8$ is empty,
so there is no fundamental matrix.
Hence, we may assume $\rank(Z) \leq 8$. If $\rank(Z)=8$ then $\ker_\RR(Z)$ is a point $a$
in $\PP_\RR^8$, given by a $1$-dimensional space of matrices $A$.
  The following lemma is immediate.

\begin{lemma} \label{lem:rank=8forF}
If $\rank(Z)=8$ then $Z$ has a fundamental matrix if and only if $\ker_\RR(Z) = \{ a \}$
has the property that $\det(A) = 0$ and $q_{ij}(A) \neq 0$ for at least one of the $2 \times 2$ minors $q_{ij}$ that cut out $\mathcal{R}_1$. In that case, $A$ is the unique fundamental matrix for $Z$.
\end{lemma}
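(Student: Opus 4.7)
The statement is essentially an unpacking of the setup established immediately beforehand, so my plan is to make each implication explicit rather than attempt anything clever. I would organize the argument in three short steps.

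First, I would translate the rank hypothesis into a statement about the kernel. Since $Z \in \RR^{m \times 9}$ has $\rank(Z) = 8$, the vector space $\ker_\RR(Z) \subset \RR^9$ is one-dimensional; as a point set in projective space, $\ker_\RR(Z) = \{a\} \subset \PP_\RR^8$. Equivalently, the solutions of the epipolar system $Zf = 0$ are exactly the scalar multiples of one nonzero matrix $A$.

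Second, I would spell out what it means for this unique candidate $A$ to lie in $\mathcal{F}$. Using the decomposition $\mathcal{F} = (\mathcal{R}_2 \setminus \mathcal{R}_1) \cap \PP_\RR^8$ together with the descriptions $\mathcal{R}_2 = \{a \in \PP_\CC^8 : \det(A) = 0\}$ and $\mathcal{R}_1 = \{a \in \PP_\CC^8 : q_{ij}(A) = 0 \text{ for all } i,j\}$ given just above the lemma, membership $A \in \mathcal{F}$ is equivalent to the conjunction $\det(A) = 0$ and $q_{ij}(A) \neq 0$ for at least one $2 \times 2$ minor.

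Third, I would invoke the criterion (\ref{eq:exists F}): a fundamental matrix exists for the data $Z$ iff $\ker_\RR(Z) \cap \mathcal{F} \neq \emptyset$. Because $\ker_\RR(Z) = \{a\}$ is a single projective point, this intersection is nonempty iff $a \in \mathcal{F}$, which by the previous paragraph is exactly the stated polynomial condition on $A$. The uniqueness assertion is then automatic: any fundamental matrix must in particular satisfy $Zf = 0$ and thus be a scalar multiple of $A$, so $A$ represents the unique fundamental matrix (up to scale) whenever it exists.

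There is no real obstacle here; the content is definitional once the irreducible characterizations of $\mathcal{R}_1$ and $\mathcal{R}_2$ from the preceding paragraph are in hand. The only point that deserves emphasis in writing is the identification of the single projective point $\{a\}$ with the one-dimensional family of matrices $\{\lambda A : \lambda \in \RR\}$, so that the polynomial conditions $\det(A) = 0$ and $q_{ij}(A) \neq 0$ are genuinely well-defined on $\PP_\RR^8$ (each being homogeneous in the entries of $A$).
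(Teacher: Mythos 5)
Your proof is correct and is exactly the definitional unpacking the paper has in mind: the paper simply declares the lemma ``immediate'' after establishing the criterion $\ker_\RR(Z)\cap\mathcal{F}\neq\emptyset$ and the characterizations of $\mathcal{R}_1$ and $\mathcal{R}_2$, and your three steps spell out precisely that reasoning, including the uniqueness observation.
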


We illustrate Lemma~\ref{lem:rank=8forF} in the simplest possible situation.

\begin{example} \textup{[{$m=\rank(Z)=8$}]:} \label{ex:m_equals_eight}
Consider eight point correspondences such that
$Z \in \RR^{8 \times 9}$ has rank $8$. Let $Z^i$ denote the $8 {\times} 8$ matrix obtained
from $Z$ by
deleting the $i$th column. By Cramer's rule, the unique (up to scaling) point $a$ in
$\ker_\RR(Z)$ is given by $a^\top = ( (-1)^{i-1} \det(Z^i) )_{i=1}^{9}$.
Then $Z$ has a fundamental matrix
if and only~if
$$ \det(A) = \det \begin{bmatrix}
\det(Z^{1}) &  - \det(Z^{2}) &  \det(Z^{3})\\
-\det(Z^{4}) &   \det(Z^{5}) &  -\det(Z^{6})\\
 \det(Z^{7}) &  - \det(Z^{8}) & \det(Z^9)
\end{bmatrix} = 0$$
and at least one of the nine $2 \times 2$ minors of $A$ is not zero. These minors are
$$\det(Z^1)\det(Z^5)-\det(Z^2)\det(Z^4),\, \ldots\,,\, \det(Z^1)\det(Z^9)-\det(Z^6)\det(Z^8). $$
\end{example}

\begin{remark} \label{rmk:cramer vectors}
\textup{
Notice that in Example~\ref{ex:m_equals_eight}, the check in Lemma~\ref{lem:rank=8forF} was written directly in terms of $Z$, or equivalently, the point correspondences $\{ (x_i, y_i) \,:\, i=1,\ldots,8\}$. In fact, whenever $\rank(Z)=8$, for any value of $m$, an element in
$\ker_\RR(Z)$ is of the form $a_S^\top = ( (-1)^{i-1} \det(Z_S^i) )_{i=1}^{9}$ where $Z_S$ is a maximal submatrix of $Z$ with
eight rows, and all non-zero vectors $a_S$ are the same up to scale. Therefore, $Z$ has a fundamental matrix if and only if any non-zero $a_S$ passes the test in Example~\ref{ex:m_equals_eight}.
}
\end{remark}

We now examine what happens when $\rank(Z) \leq 7$. First, a few easy facts:

\begin{lemma} \label{lem:basic F facts}
\begin{enumerate}
\item If $\rank(Z) \leq 7$ then $\ker_\RR(Z) \cap \mathcal{R}_2 \neq \emptyset$.%
\item If $\rank(Z) = 7$, then either $\ker_\RR(Z) \subseteq \mathcal{R}_2$  or
$|\ker_\RR(Z) \cap \mathcal{R}_2| \leq 3$.
\item Suppose $A_1, \ldots, A_t$ is a vector space basis of $\ker_\RR(Z)$ and $M(u) = \sum_{i=1}^t A_iu_i$.
Then $\ker_\RR(Z) \subseteq \mathcal{R}_2$ if and only if $\det(M(u))$ is the zero polynomial.
\item Further, $\ker_\RR(Z) \subseteq \mathcal{R}_1$ if and only if $q_{ij}(M(u)) = 0$ for all $1 \leq i,j \leq 3$.
\end{enumerate}
\end{lemma}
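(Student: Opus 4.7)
The plan is to handle the four items in turn, leveraging (a) the cubic degree of $\mathcal{R}_2$ cut out by $\det(A)$, (b) Bezout-style counting on a projective line, and (c) the fact that a polynomial in several real variables vanishes identically iff it vanishes on all real inputs.

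For part (1), when $\rank(Z) \le 7$ the projective linear space $\ker_\RR(Z) \subseteq \PP_\RR^8$ has dimension $\ge 1$, so it contains a real projective line $L$. I would pick two linearly independent real points in $\ker_\RR(Z)$, parametrize the line $L$ as $\{sA+tB : (s:t)\in\PP^1_\RR\}$, and restrict the real cubic form $\det$ to $L$. This produces a homogeneous cubic $\det(sA+tB) \in \RR[s,t]$, which (being of odd degree in one affine variable after dehomogenizing) either is identically zero or has a real root. In either case $L$, and hence $\ker_\RR(Z)$, meets $\mathcal{R}_2$. This gives (1).

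For part (2), when $\rank(Z)=7$ the projective line $\ell := \ker_\CC(Z)$ is one-dimensional in $\PP_\CC^8$. The intersection $\ell \cap \mathcal{R}_2$ is cut out on $\ell$ by the restriction of the cubic $\det$; by Bezout, either this restriction is identically zero on $\ell$ (so $\ell \subseteq \mathcal{R}_2$, and consequently $\ker_\RR(Z) \subseteq \mathcal{R}_2$) or it has at most $3$ zeros, so $|\ker_\CC(Z)\cap \mathcal{R}_2| \le 3$, hence certainly $|\ker_\RR(Z)\cap \mathcal{R}_2| \le 3$. The one thing to verify is the converse of the dichotomy: if $\ker_\RR(Z) \subseteq \mathcal{R}_2$ then $\det$ vanishes on infinitely many real points of $\ell$ and therefore on all of $\ell$; this is essentially the same idea used in (3).

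For parts (3) and (4), the key observation is that $\det(M(u))$ is a polynomial in $u_1,\ldots,u_t$ with real coefficients, and every element of $\ker_\RR(Z)$ has the form $M(u)$ for some real $u$. Thus $\ker_\RR(Z) \subseteq \mathcal{R}_2$ iff $\det(M(u))$ vanishes on all of $\RR^t$, which by the density (in the Zariski sense) of $\RR^t$ in $\CC^t$ holds iff $\det(M(u))$ is the zero polynomial in $\RR[u_1,\ldots,u_t]$. Part (4) is the identical argument applied separately to each of the nine $2\times 2$ minors $q_{ij}(M(u))$, using the characterization $\mathcal{R}_1 = \{A : q_{ij}(A)=0\ \forall i,j\}$.

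There is no real obstacle here; the only subtlety is the passage from ``vanishes on $\RR^t$'' to ``is the zero polynomial,'' which is standard, and the odd-degree argument for (1), which is what makes real intersection (not just complex) automatic at this dimension count.
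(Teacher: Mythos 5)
Your proof is correct and takes essentially the same approach as the paper's: restrict the cubic form $\det$ to a real projective line inside $\ker_\RR(Z)$ and use its odd degree to get a real root for (1), the fact that a nonzero cubic on a line has at most three zeros for (2), and the equivalence ``vanishes on all of $\RR^t$ $\iff$ is the zero polynomial'' for (3) and (4). The only extra remark you make (checking the converse of the dichotomy in (2)) is unnecessary, since the statement is an ``either/or'' whose first branch is the hypothesis, but it does no harm.
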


\begin{proof}
(1) is well known \cite[Section 11.1.2]{hartley-zisserman-2003}  and holds since
the $3 \times 3$-determinant is a cubic polynomial. Fact (2) follows from (1) since a cubic has at most three real roots.
For (3), note that every element in $\ker_\RR(Z)$ is of the form $M(u)$ for some $u \in \RR^t$. Therefore, $\det(M(u))$ is the zero polynomial if and only if all matrices in $\ker_\RR(Z)$ have rank at most~$2$.
The last claim comes from the fact that the polynomials $q_{ij}$ cut out $\mathcal{R}_1$.
\end{proof}

Despite Lemma~\ref{lem:basic F facts} (1), even when $\rank(Z) \leq 7$,
 $Z$ may not have a fundamental matrix since
$\ker_\RR(Z)$ may be entirely inside $\mathcal{R}_1$. Here is such an example.

\begin{example} \label{ex:only rank ones}
Consider the following seven point correspondences:
$$
{
Y = \left[ \begin{array}{rrr}
-3 &\,\,\, 5 & \,\,\,1 \\
5 &\,\,\, -2 &\,\,\, \,1\\
 8 &\,\,\, -9 &\,\,\, \,1  \\
 11 &\,\,\, -16 &\,\,\, \,1 \\
 14 &\,\,\, -23 &\,\,\, \,1 \\
 17 &\,\,\, -30 &\,\,\, \,1 \\
 20 &\,\,\, -37 &\,\,\, \,1
 \end{array}
\right] \quad \textup{ and } \quad
X = \left[ \begin{array}{rrr}
10 &\,\,\, 4 &\,\,\, 1 \\
-7 &\,\,\, 0 &\,\,\, 1\\
-4 &\,\,\, 4 &\,\,\, 1\\
-7 &\,\,\, 1 &\,\,\, 1 \\
0 &\,\,\, -1 &\,\,\, 1 \\
1 &\,\,\, -8 &\,\,\, 1 \\
1 &\,\,\, -4 &\,\,\, 1
\end{array}
\right].
}
$$
Here $\rank(Z) = 7$ and $\ker_\RR(Z)$ is spanned by the rank one matrices
$$ A_1 =\left[ \begin{array}{rrr}
7 &\,\,\, 0 &\,\,\, -70\\
3 &\,\,\, 0 &\,\,\, -30\\
-29 &\,\,\, 0 &\,\,\, 290
\end{array} \right] \textup{ and }
A_2 = \left[ \begin{array}{rrr}
0 &\,\,\, 7 &\,\,\, -28\\
0 &\,\,\, 3 &\,\,\, -12\\
0 &\,\,\, -29 &\,\,\, 116
\end{array}
\right].$$
All nine  $2 \times 2$-minors of the following matrix are identically zero:
$$ u_1A_1 + u_2A_2 = 
\left[ \begin{array}{ccc}
7u_1 &\,\,\, 7u_2 &\,\,\, -70u_1-28u_2\\
3u_1 &\,\,\, 3u_2 &\,\,\, -30u_1 -12u_2\\
-29u_1 &\,\,\, -29u_2 &\,\,\, 290u_1+ 116u_2
\end{array}
\right] $$
We are in the situation of Lemma~\ref{lem:basic F facts} (4):
all matrices in $\ker_\RR(Z)$ have rank one.
 \end{example}

Let us now see what happens if the rank of $Z$ is small. One would expect that a large kernel  would make intersection with $\mathcal{F}$ easier. Indeed this is the case. To prove this, we start with a generalization of a result of Chum et al.~\cite{matas-et-al-2005}. Recall that a homography on $\PP^2_\RR$ is a $3\times3$ invertible matrix.

\begin{lemma}\label{lemma:chum}
If $m-2$ or more of the point correspondences are related by a homography $H$, i.e., for at least $m-2$ of the indices $i \in [m]$, $y_i \sim H x_i$, then there exists a fundamental matrix relating all the correspondences.
\end{lemma}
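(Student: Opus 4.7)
The plan is to use the parametrization of fundamental matrices mentioned at the end of Section~\ref{sec:epipolar}: every fundamental matrix can be written as $F = [b]_\times H'$ with $b \in \RR^3 \setminus \{0\}$ and $H'$ an invertible $3 \times 3$ matrix, and conversely every such product has rank two. I will exploit this by fixing $H' := H$ (the given homography) and searching for $b \in \RR^3 \setminus \{0\}$ so that $F = [b]_\times H$ satisfies the epipolar constraints for all $m$ correspondences.

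The key observation is that the constraints coming from the $m-2$ (or more) ``homography'' correspondences are automatic: if $y_i \sim H x_i$, then $H x_i$ and $y_i$ are parallel, so
\begin{equation*}
y_i^\top F x_i \,=\, y_i^\top [b]_\times (H x_i) \,=\, y_i^\top \bigl( b \times H x_i \bigr) \,=\, 0,
\end{equation*}
since $b \times H x_i$ is orthogonal to $H x_i$ and hence to any scalar multiple of it. This holds for every choice of $b$, so those correspondences impose no constraints on $b$ at all.

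For the remaining at most two correspondences, each epipolar constraint becomes
\begin{equation*}
0 \,=\, y_i^\top [b]_\times H x_i \,=\, y_i^\top (b \times H x_i) \,=\, (H x_i \times y_i)^\top b,
\end{equation*}
which is a single homogeneous linear equation in $b$. Thus I have at most two homogeneous linear equations on $b \in \RR^3$. Such a system always admits a nontrivial real solution, since the kernel of a matrix in $\RR^{\le 2 \times 3}$ has dimension at least one.

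Choosing any nonzero $b$ in this kernel gives $F = [b]_\times H$, which has rank two because $[b]_\times$ has rank two and $H$ is invertible; hence $F \in \mathcal{F}$ is a genuine fundamental matrix satisfying all $m$ epipolar constraints. The only technical subtlety is to invoke the parametrization $F = [b]_\times H$ in exactly the form needed (with $H$ the given homography rather than an arbitrary invertible matrix), but this is harmless because the construction produces a valid fundamental matrix regardless.
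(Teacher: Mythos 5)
Your proof is correct and follows essentially the same approach as the paper: fix $H$ as the given homography in the parametrization $F = [b]_\times H$, observe that the $m-2$ homography-related pairs impose no constraint on $b$, and note that the remaining at most two pairs give at most two homogeneous linear conditions on $b \in \RR^3$, which always admit a nonzero solution. The only cosmetic difference is that the paper reaches the vanishing of $y_i^\top(b\times Hx_i)$ via the scalar triple product identity and a rank condition on the $3\times m$ matrix of columns $y_i\times Hx_i$, whereas you argue it directly from orthogonality of $b\times Hx_i$ to $Hx_i \sim y_i$.
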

\begin{proof}
Recall that a fundamental matrix can be written in the form
\begin{align}
F = [b]_\times H
\end{align}
where $b$ is a non-zero vector in $\RR^3$ and $H$ is a real invertible matrix. Then the epipolar constraints can be re-written as
\begin{align}
& y_i^\top F x_i = 0, \forall i = 1, \hdots, m. \nonumber\\
\iff &y_i^\top [b]_\times Hx_i = 0, \forall i = 1, \hdots, m. \nonumber\\
\iff &y_i^\top[b \times H x_i] = 0, \forall i = 1, \hdots, m. \label{eq:before}\\
\iff &b^\top[y_i \times H x_i] = 0, \forall i = 1, \hdots, m.\label{eq:after}\\
\iff &b^\top\begin{bmatrix} \cdots & y_i \times H x_i & \cdots \end{bmatrix} = 0 \nonumber\\
\iff &\rank \begin{bmatrix} \cdots & y_i \times H x_i & \cdots \end{bmatrix} < 3\label{eq:rank}
\end{align}
The equivalence of~\eqref{eq:before} and~\eqref{eq:after} follows from the fact that $p^\top (q \times r) = -q^\top(p \times r)$. The matrix in~\eqref{eq:rank} is of size $3 \times m$. A sufficient condition for it to have rank less than 3 is for $m-2$ or more columns to be equal to zero. This is the case if $m-2$ or more of the pairs $(x_i,y_i)$ are related by the homography $H$.
\end{proof}

The observation about the scalar triple product and the resulting rank constraint 
has also been used by Kneip {\em et al.}~\cite{kneip-et-al} but only in the calibrated case. Using the above lemma we can now prove the following.

\begin{theorem}\label{thm:rank4}
If $\rank(Z) \le 4$, then $Z$ has a fundamental matrix.
\end{theorem}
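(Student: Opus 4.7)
The plan is to reduce to the case of at most four correspondences and then invoke Lemma~\ref{lemma:chum}.

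First, since $r := \rank(Z) \le 4$, I choose $r$ rows of $Z$ that span its row space and retain only the corresponding $r$ point correspondences. The remaining rows of $Z$ are linear combinations of the chosen ones, so any $F$ satisfying the epipolar constraints for the $r$ selected pairs automatically satisfies them for all $m$ pairs. Thus it suffices to find a fundamental matrix for at most $r \le 4$ correspondences.

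Next, I apply Lemma~\ref{lemma:chum}. For $r \le 3$ its hypothesis (that $r-2$ correspondences share a homography) is vacuous or satisfied by any single pair, so the conclusion follows at once. For $r = 4$, I must exhibit a single invertible homography $H$ relating two of the four correspondences, i.e.\ $Hx_i \sim y_i$ and $Hx_j \sim y_j$ for some $i \neq j$. A short computation shows that such an $H$ exists precisely when $\{x_i, x_j\}$ is a linearly independent pair exactly if $\{y_i, y_j\}$ is (the parallelism must ``match'' on the two sides).

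To produce a matching pair when $r = 4$, observe that $\rank(Z) = 4$ rules out the situation in which all $x_i$'s are pairwise parallel, since then the rank-one rows $y_i \otimes x_i$ would all lie in the three-dimensional subspace $\mathrm{span}\{y_i\} \otimes x_0$, giving $\rank(Z) \le 3$; the symmetric statement holds for the $y_i$'s. A brief combinatorial case analysis of the pairwise parallelism relation on $\{1,2,3,4\}$ shows that ``every pair mismatched'' forces either all $x_i$'s or all $y_i$'s to be parallel, contradicting $\rank(Z) = 4$. So some pair matches, and Lemma~\ref{lemma:chum} delivers the fundamental matrix.

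The main obstacle is this final combinatorial case analysis: ruling out all assignments of ``mismatch types'' to the six pairs in $\binom{[4]}{2}$ that are compatible with $\rank(Z) = 4$ requires checking a handful of configurations, together with the elementary but slightly fiddly characterization of when two correspondences admit a common invertible homography.
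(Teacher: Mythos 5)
Your proof is correct and takes essentially the same route as the paper: reduce to at most four correspondences with $\rank(Z)$ full row rank, then invoke Lemma~\ref{lemma:chum}, using the observation that $\rank(Z)=4$ forces $\rank(X)\geq 2$ and $\rank(Y)\geq 2$ to locate a suitable pair. The only micro-level difference is in the $m=4$ endgame: the paper constructs the matching pair directly (starting from $x_1\not\sim x_2$ and case-splitting on whether $y_1\sim y_2$), whereas you argue by contradiction that ``every pair mismatched'' forces one side to be totally degenerate; the latter is a valid (if slightly more case-heavy) variant of the same idea, and the combinatorial claim you flag as ``fiddly'' does indeed hold because parallelism on each side is an equivalence relation on $\{1,2,3,4\}$ and two equivalence relations cannot disagree on every pair unless one of them is the all-one-class relation.
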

\begin{proof}
For a set $\tau \subseteq [m]$, let $Z_\tau$ denote the submatrix of $Z$ whose row indices
are in $\tau$ (For completeness, $Z_{[m]\backslash \tau}$ is the submatrix of $Z$ whose row indices are not in $\tau$ and $Z_\emptyset := 0\in \RR^{1\times 3}$). It is straight forward to see that if $\rank(Z) = k$ then there exists a set $\tau \subseteq [m]$ of size $k$ such that $\rank(Z_\tau) = k$ and $Z$ has a fundamental matrix if and only if $Z_\tau$ has a fundamental matrix. Thus, we can restrict our discussion to the case $m \le 4$, with $Z$ having full row rank.

For $m=1$ and $m=2$, $H=I$, the identity matrix satisfies equation~\eqref{eq:rank}. For $m = 3$, one can always construct an invertible matrix $H$ such that $y_1 \sim H x_1$ which implies that $y_1 \times H x_1 = 0$ and equation~\eqref{eq:rank} is satisfied.

Let us now consider the case $m = 4$.  Since we assumed that $\rank(Z) = 4$, we must have  $\rank(X) \ge 2$ and $\rank(Y) \ge 2$. To see this, recall that each row of $Z$ is $z_i = y_i^\top \otimes x_i^\top$. Since $Y$ is a $4 \times 3$ matrix, $\rank(Y) \leq 3$, and so there is a
non-zero vector $\mu$ such that $\mu^\top Y = 0$, i.e.,
\begin{equation} \label{eq:coordinate-wise zero}
\sum_{i=1}^4 \mu_i y_{ij} = 0,\ j = 1,2,3.
\end{equation}

If $\rank(X) = 1$, we may assume that all rows of $X$ are scalar multiples of $x_1$. In fact, since the last coordinate of any $x_i$ is one, all rows are equal to $x_1$. Then $z_i = y_i^\top \otimes x_1^\top$, which implies that  
\begin{align} \label{eq:dependence}
\mu^\top Z = 0.
\end{align}
This means that $\rank(Z) < 4$ which is a contradiction. Therefore, $\rank(X) \geq 2$.

To see \eqref{eq:dependence}, note that the first coordinate of $\mu^\top Z$ is
\begin{align*}
\sum_i \mu_i y_{i1} x_{11}
                  = x_{11} \sum_i \mu_i y_{i1}
                   = 0
\end{align*}
by \eqref{eq:coordinate-wise zero}. Similarly all the other coordinates of $\mu^\top Z$ are also zero.  Exchanging the roles of $X$ and $Y$ we also get that $\rank(Y) \geq 2$.

If we can find two indices $i$ and $j$ such that the matrices $\begin{bmatrix} x_i & x_j\end{bmatrix}$ and $\begin{bmatrix} y_i & y_j\end{bmatrix}$ both have rank 2 then we can construct an invertible matrix $H$ such that $y_i \sim H x_i$ and $y_j \sim H x_j$ and that would be enough for~\eqref{eq:rank}.
Without loss of generality let us assume that the matrix $\begin{bmatrix} x_1 & x_2\end{bmatrix}$ is of rank 2, i.e., $x_1 \not\sim x_2$. If $\begin{bmatrix} y_1 & y_2\end{bmatrix}$ has rank 2 we are done. So let us assume that this is not the case and $y_2 \sim y_1$. Since $\rank(Y) \ge 2$, we can without loss of generality assume that $y_3 \not\sim y_1$. Since $x_1 \not\sim x_2$, either, $x_3 \not\sim x_1$ or $x_3 \not\sim x_2$.
In the former case, $i = 1, j = 3$ is the pair we want, otherwise $i = 2, j = 3$ is the pair we want.
\end{proof}

We next characterize when $\ker_\RR(Z)$ contains a rank one matrix.
\begin{lemma} \label{lem:sameer}
The intersection $\ker_\RR(Z) \cap \mathcal{R}_1 $ is non-empty if and only if
there exists $\tau \subseteq [m]$ such that the points in $\{x_i\}_{i\in\tau}$ and $\{y_i\}_{i \in [m]\setminus \tau}$
are each collinear in $\RR^2$.
\end{lemma}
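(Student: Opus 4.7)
The plan is to exploit the elementary fact that a matrix $A \in \mathcal{R}_1$ (rank one, up to scale) admits a factorization $A = b c^\top$ with $b, c \in \RR^3 \setminus \{0\}$. Under this factorization, the epipolar constraint becomes a product:
\[
y_i^\top A x_i \,=\, (y_i^\top b)(c^\top x_i).
\]
So $A \in \ker_\RR(Z)$ if and only if for each $i$, at least one of the scalars $y_i^\top b$, $c^\top x_i$ vanishes. This naturally partitions $[m]$ into two sets according to which factor is zero, which is what the proposed $\tau$ should index.

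For the forward direction, I would take $A = bc^\top \in \ker_\RR(Z) \cap \mathcal{R}_1$ and set $\tau := \{ i \in [m] : c^\top x_i = 0 \}$. Then for $i \notin \tau$ we must have $y_i^\top b = 0$. Since each homogenized $x_i$ has last coordinate $1$, the equation $c^\top x = 0$ cuts out either (i) a proper affine line in $\RR^2$ (when $(c_1, c_2) \neq 0$), on which all $x_i, i\in\tau$, lie, or (ii) the line at infinity (when $c = (0,0,c_3)$), in which case no $x_i$ can satisfy it and $\tau = \emptyset$. Either way $\{x_i\}_{i\in\tau}$ is collinear in $\RR^2$. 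The same argument, applied to $b$ and the $y_i$'s, shows $\{y_i\}_{i\in [m]\setminus\tau}$ is collinear in $\RR^2$.

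For the reverse direction, suppose $\tau$ is given with both $\{x_i\}_{i\in\tau}$ and $\{y_i\}_{i\in[m]\setminus\tau}$ collinear in $\RR^2$. Collinearity of the affine points $x_i, i\in\tau$, implies their homogenizations span a subspace of dimension at most $2$ in $\RR^3$, so there exists a nonzero $c \in \RR^3$ with $c^\top x_i = 0$ for all $i \in \tau$ (when $\tau = \emptyset$ any $c$ works; when $|\tau|\le 2$ two linear conditions on $\RR^3$ always have a nonzero solution; when $|\tau| \ge 3$ the line equation $ax+by+d=0$ in $\RR^2$ gives $c = (a,b,d)$). Analogously, choose a nonzero $b \in \RR^3$ with $b^\top y_i = 0$ for $i \in [m] \setminus \tau$. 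Then $A := bc^\top$ is a real rank-one matrix whose image $a \in \PP^8_\RR$ satisfies $y_i^\top A x_i = (y_i^\top b)(c^\top x_i) = 0$ for every $i$, so $a \in \ker_\RR(Z) \cap \mathcal{R}_1$.

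I do not anticipate any genuine obstacle; the only care needed is the bookkeeping that being cut out by a linear form in $\PP^2_\RR$ corresponds exactly to affine collinearity in $\RR^2$ once one accounts for the (benign) exceptional case where the form is supported on the line at infinity, which forces the corresponding index set to be empty.
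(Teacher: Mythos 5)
Your proof is correct and takes essentially the same approach as the paper: factor a rank-one $A$ as $bc^\top$, observe that the epipolar constraint becomes $(y_i^\top b)(c^\top x_i)=0$, and partition $[m]$ by which factor vanishes. The paper phrases the collinearity condition as the linear-algebra statement $\ker_\RR(X_\tau)\neq\{0\}$ and $\ker_\RR(Y_{[m]\setminus\tau})\neq\{0\}$ and then argues both directions exactly as you do; your extra care about the line-at-infinity case (when $c=(0,0,c_3)$) is a harmless refinement that the paper leaves implicit.
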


\begin{proof}
The collinearity condition is equivalent to the linear algebra statement
% there are $\tau \subseteq [m]$, $(a_1,a_2),(b_1,b_2)\in \RR^2 \setminus\{0\}$ and $a_3,b_3\in \RR$  such that
% $\{x_i  :  i\in \tau\} \subseteq \{(u_1,u_2)\in \RR^2: a_1 u_1 + a_2 u_2 + a_3 = 0\}$ and the complementary points $\{ y_i : i \in [m]\setminus \tau\} \subseteq \{(u_1,u_2)\in \RR^2: b_1 u_1 + b_2 u_2 + b_3 = 0\}$. Equivalently,
% there is  some $\tau \subseteq [m]$ such that
\begin{align} \label{eq:set_tau}
\ker_\RR (X_\tau) \neq \{0\} \textup{ and  } \ker_\RR (Y_{[m]\backslash \tau}) \neq  \{0\}.
\end{align}
We claim that $\ker_\RR(Z) \cap \mathcal{R}_1 \neq \emptyset$ if and only if (\ref{eq:set_tau}) holds.
If $a\in  \ker_\RR(Z) \cap \mathcal{R}_1 $,  then $A = uv^\top$ for some $u,v\in \RR^3 \backslash \{0\}$.
For each $i$, we get $y^\top_i A x_i = (y_i^\top u)  (x_i^\top v) = 0$. Therefore, either $y_i^\top u =0$ or $  x_i^\top v = 0$.
Then $\tau := \{ i \in [m] \,:\, x_i^\top v =0 \}$ is a set of the desired form.
On the other hand, if there is a set $\tau \subseteq [m]$ such that
$\ker_\RR (X_\tau) \neq \{0\}$ and  $\ker_\RR (Y_{[m]\backslash \tau}) \neq  \{0\}$,
pick elements  $v \in$ ker$(X_\tau) \backslash \{0\}$ and
$u \in $ ker$(Y_{ [m] \setminus \tau}) \backslash \{0\}$, and set $A := uv^\top$.
For all $i$, we have $y^\top_i uv^\top x_i = 0$, and hence $a \in   \ker_\RR(Z) \cap \mathcal{R}_1 $.
\end{proof}

If $5\le \rank(Z) \leq 7$, then by Lemma~\ref{lem:basic F facts} (1) and Lemma~\ref{lem:sameer}, if
 no $\tau \subseteq [m]$ satisfies (\ref{eq:set_tau})
then $Z$ has a fundamental matrix. In order to get a complete characterization of when $Z$ has a fundamental matrix, we need to understand precisely when $\ker_\RR(Z)$ contains a rank two matrix. There are two cases:
\begin{center}
\begin{tabular}{ll}
either & (1) all matrices in $\ker_\RR(Z)$ have rank at most 2, i.e., $\ker_\RR(Z) \subseteq \mathcal{R}_2$\\
 or  & (2) $\ker_\RR(Z)$ contains an invertible matrix.
\end{tabular}
\end{center}

By Lemma~\ref{lem:basic F facts} (3),  case (1) is witnessed by  $\det(M(u)) = \det(\sum_{i=1}^t A_i u_i)$ being
the zero polynomial, where $\{ A_1, \ldots, A_t \}$ is a basis of $\ker_\RR(Z)$. Further, we can check if
$\ker_\RR(Z) \subseteq \mathcal{R}_1$ by using Lemma~\ref{lem:basic F facts} (4). If some $2 \times 2$ minor of $M(u)$ is not the zero polynomial, then $\ker_\RR(Z)$ contains a rank two matrix.
 Therefore, we can focus on case (2) in the rest of this section. In that case,
 $\det(M(u))$ is a non-zero polynomial.
We begin with a lemma that guarantees a rank two matrix in $\ker_{\RR}(Z)$.

\begin{lemma} \label{lem:bernd}
Let $L$ be a positive dimensional subspace in the projective space of real $3 \times 3$ matrices that contains a matrix of rank three. If  the determinant  restricted to $L$ is not a power of a linear form, then $L$ contains a real matrix of rank two.
\end{lemma}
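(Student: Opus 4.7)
The plan is to reduce the statement to the case where $L$ is a projective line, and then settle that case by examining the real factorization of the restricted binary cubic $\det|_L$.

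For the reduction, choose a representative $A_0$ of a point in $L$ with $\rank(A_0)=3$, so $\det(A_0)\neq 0$, and consider the identity
\[ \det(A_0+tB)\,=\,\det(A_0)+t\,\mathrm{tr}(\mathrm{adj}(A_0)\,B)+t^2\,\mathrm{tr}(A_0\,\mathrm{adj}(B))+t^3\det(B) \]
for $B$ in the linear span of $L$. The aim is to find $B\not\sim A_0$ such that this cubic in $t$ is not a cube of a linear form; the projective line $L':=\overline{A_0 B}\subseteq L$ will then carry a binary cubic $\det|_{L'}$ that is nonzero (as $\det(A_0)\neq 0$) and not a cube of a linear form. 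Suppose no such $B$ exists. Writing $\det(A_0+tB)=(\alpha_B+\beta_B t)^3$ and matching coefficients forces $\alpha_B=\det(A_0)^{1/3}$, a constant independent of $B$, and $\beta_B=\mathrm{tr}(\mathrm{adj}(A_0)\,B)/(3\det(A_0)^{2/3})$, a linear function of $B$. The $t^3$ coefficient then yields $\det(B)=\beta_B^3$ for every $B$, exhibiting $\det|_L$ itself as the cube of a linear form and contradicting the hypothesis. The witness $B$ cannot be a scalar multiple of $A_0$, since $\det(A_0+tA_0)=(1+t)^3\det(A_0)$ is already a cube.

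For the base case, I factor the real binary cubic $\det|_{L'}$ over $\R$: the possibilities are a product of three distinct real linear forms, a squared linear form times a distinct linear form, a linear form times a real irreducible quadratic, or a pure cube $\ell^3$. The last is excluded by hypothesis, so in every remaining case $\det|_{L'}$ admits a simple real root $p\in L'$. The corresponding matrix $A_p$ satisfies $\det(A_p)=0$ and hence $\rank(A_p)\leq 2$. If $\rank(A_p)\leq 1$, then every $2\times 2$ minor of $A_p$ vanishes, so $\mathrm{adj}(A_p)=0$; combined with $\nabla\det=\mathrm{adj}^\top$ this forces $\det|_{L'}$ to vanish to order at least two at $p$, contradicting simplicity. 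Hence $\rank(A_p)=2$, and $A_p$ is the desired real rank-two matrix in $L'\subseteq L$.

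The main obstacle is the rigidity claim inside the reduction: that if $\det(A_0+tB)$ happens to be a cube in $t$ for \emph{every} $B$ in the linear span of $L$, then $\det|_L$ must itself be the cube of a linear form. The coefficient-matching with $(\alpha+\beta t)^3$ is exactly what provides this rigidity, pinning down $\alpha$ as a constant cube root of $\det(A_0)$ and $\beta$ as a linear functional on $L$ before reading off $\det|_L=\beta^3$ from the leading coefficient.
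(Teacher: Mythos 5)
Your proof is correct, and it takes a genuinely different route from the paper in both halves of the argument. For the one-dimensional case, the paper normalizes $L$ to $\{M\mu - I\eta\}$ (using the rank-three matrix to invert) so that $\det|_L$ becomes the characteristic polynomial of $M$, and then runs a case analysis on eigenvalue multiplicities: a real eigenvalue $\lambda_1$ with $\rank(M - \lambda_1 I) = 1$ forces $\lambda_1$ to be a double eigenvalue, hence the third eigenvalue is real and, if distinct, simple, giving rank two; only the triple-eigenvalue case yields a cube. Your argument replaces this with the intrinsic observation that a simple real root $p$ of the binary cubic must give a rank-two matrix, because $\rank(A_p) \le 1$ would kill $\mathrm{adj}(A_p)$ and hence the gradient $\nabla\det$ at $p$, forcing a double root. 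For the reduction from $\dim(L) \ge 2$ to the line case, the paper simply asserts that some line $L'$ in $L$ has $\det|_{L'}$ not a power of a linear form; your coefficient-matching argument against $(\alpha_B + \beta_B t)^3$ proves this and, more importantly, produces such an $L'$ passing through the rank-three point $A_0$ — which is precisely what the line-case argument requires (a line that is entirely rank $\le 2$ has $\det|_{L'} \equiv 0$ and the one-dimensional reasoning would not apply). So your reduction is both a proof of the paper's unproven assertion and a slight tightening of it.
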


\begin{proof} We first consider the case when $L$ is a projective line, i.e., $$L = \{A\mu + B\eta \,:\, \mu,\eta \in \RR \}$$  for some  $A,B \in \RR^{3 \times 3}$, with $B$ invertible. Then $B^{-1}L = \{ B^{-1}A\mu + I\eta \,:\,\mu,\eta \in \RR \}$ is an isomorphic image of $L$ and contains a matrix of rank two if and only if $L$ does.
 Hence we can assume $L = \{ M \mu - I \eta \,:\, \mu,\eta \in \RR\}$ for some $M \in \RR^{3 \times 3}$. The homogeneous cubic polynomial $\det(M \mu - I\eta)$ is not identically zero on $L$.
 When dehomogenized by setting $\mu=1$, it is the characteristic polynomial of $M$. Hence the three roots of $\det(M \mu - I \eta)=0$ in $\PP^1$ are $(\mu_1,\eta_1) \sim (1,\lambda_1), (\mu_2,\eta_2) \sim (1,\lambda_2)$ and $(\mu_3,\eta_3) \sim (1,\lambda_3)$ where $\lambda_1, \lambda_2, \lambda_3$ are the eigenvalues of $M$. At least one of these roots is real since $\det(M\mu - I\eta)$ is a cubic. Suppose $(\mu_1,\eta_1)$ is real. If $\rank(M\mu_1-I\eta_1) = \rank(M-I\lambda_1) = 2$, then $L$ contains a rank two matrix.  Otherwise, $\rank(M-I\lambda_1) = 1$. Then $\lambda_1$ is a double eigenvalue of $M$ and hence equals one of $\lambda_2$ or $\lambda_3$. Suppose $\lambda_1 = \lambda_2$. This implies that $(\mu_3,\eta_3)$ is a real root as well. If it is different from $(\mu_1,\eta_1)$, then it is a simple real root. Hence, $\rank(M\mu_3-I\eta_3)=2$, and $L$ has a rank two matrix.
So suppose $(\mu_1,\eta_1) \sim (\mu_2, \eta_2) \sim (\mu_3,\eta_3) \sim (1,\lambda)$ where $\lambda$ is the unique eigenvalue of $M$.
In that case, $\det(M\mu - I\eta) = \alpha \cdot (\eta-\lambda \mu)^3$ for some constant $\alpha$.
This finishes the case $\dim(L)=1$.

Now suppose $\dim(L) \geq 2$. If $\det$ restricted to $L$ is not a power of a homogeneous linear polynomial, then there exists a projective line $L'$ in $L$ such that $\det$ restricted to $L'$ is also not the power of a homogeneous linear polynomial.
% This is because, a polynomial $f$ in $n$ variables is a power of a homogeneous linear polynomial $a_1x_1 + \ldots + a_nx_n$ if and only if its variety is a hyperplane (described by $a_1x_1 + \ldots + a_nx_n = 0$). Equivalently, the intersection of the variety of $f$ with any plane is again a plane. Therefore, the restriction of $f$ to any projective line in its variety is a power of a linear polynomial.
The projective line $L'$ contains a matrix of rank two by the above argument.
\end{proof}

\begin{theorem} \label{thm:consequence for Z}
Suppose $\rank(Z) \leq 7$, $\ker_\RR(Z)$ contains a matrix of rank three and $M(u) = \sum_{i=1}^t A_i u_i$,
where $\{A_1, \ldots, A_t\}$ is a basis for $\ker_\RR(Z)$. If $\det(M(u))$ is not the third power of a
linear form then $\ker_\RR(Z)$ contains a $3 \times 3$ matrix of rank two.
\end{theorem}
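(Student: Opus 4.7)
The plan is to deduce this immediately from Lemma~\ref{lem:bernd} by identifying $L := \ker_\RR(Z)$ with a projective linear subspace of the projective space of real $3 \times 3$ matrices $\PP_\RR^8$, and verifying the three hypotheses of that lemma one by one.

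First I would check positive dimensionality. Since $\rank(Z) \leq 7$, we have $\dim_\RR \ker_\RR(Z) = 9 - \rank(Z) \geq 2$ as a real vector space, so $\dim L \geq 1$ as a projective subspace of $\PP_\RR^8$. The second hypothesis, that $L$ contains a matrix of rank three, is given directly in the statement.

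The only step requiring a small argument is the third: translating the assumption ``$\det(M(u))$ is not the third power of a linear form'' into the assumption of Lemma~\ref{lem:bernd}, namely that $\det$ restricted to $L$ is not a power of a linear form. Using the chosen basis $\{A_1,\ldots,A_t\}$, every element of $L$ corresponds to a point $u \in \PP^{t-1}_\RR$ via $M(u) = \sum_{i=1}^t u_i A_i$, and the restriction of $\det$ to $L$ is exactly the homogeneous cubic $\det(M(u)) \in \RR[u_1,\ldots,u_t]$. Because $L$ contains a matrix of rank three, this polynomial is not identically zero; hence it is a genuine cubic form. Any nonzero cubic form that happens to be a power of a linear form must be the cube of that linear form. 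Consequently, the hypothesis ``$\det(M(u))$ is not the third power of a linear form'' is precisely ``$\det|_L$ is not a power of a linear form.''

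With all three hypotheses of Lemma~\ref{lem:bernd} verified, that lemma delivers a real rank two matrix in $L = \ker_\RR(Z)$, which is exactly the conclusion. There is no serious obstacle here; the content of the theorem lies entirely in Lemma~\ref{lem:bernd}, and my only task is the bookkeeping that shows our $\ker_\RR(Z)$ meets its hypotheses in the stated setting.
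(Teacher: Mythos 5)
Your proposal is correct and is exactly the route the paper takes: Theorem~\ref{thm:consequence for Z} is presented there as an immediate corollary of Lemma~\ref{lem:bernd} with $L = \ker_\RR(Z)$, with no further proof given. Your verification of the three hypotheses, including the observation that a nonzero cubic form which is a power of a linear form must be a cube, supplies the bookkeeping the paper leaves implicit.
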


\begin{example}
Consider the following seven point correspondences:
$$
Y = \left[ \begin{array}{rrr}
-2 &\,\,\, -3 & \,\,\,1 \\
-2 &\,\,\, -1 &\,\,\, \,1\\
 -2 &\,\,\, 2 &\,\,\, \,1  \\
 2 &\,\,\, -1 &\,\,\, \,1 \\
 2 &\,\,\, 0 &\,\,\, \,1 \\
 6 &\,\,\, -5 &\,\,\, \,1 \\
 7 &\,\,\, -6 &\,\,\, \,1
 \end{array}
\right]\quad  \textup{ and } \quad
X = \left[ \begin{array}{rrr}
2 &\,\,\, 0 &\,\,\, 1 \\
3 &\,\,\, -2 &\,\,\, 1\\
4 &\,\,\, -4 &\,\,\, 1\\
5 &\,\,\, -6 &\,\,\, 1 \\
6 &\,\,\, -8 &\,\,\, 1 \\
-3 &\,\,\, -2 &\,\,\, 1 \\
2 &\,\,\, -2 &\,\,\, 1
\end{array}
\right].
$$
Here $\rank(Z) = 7$ and $\ker_\RR(Z)$ is spanned by the following matrices
of rank $1$ and $3$:
$$ A_1 =\left[ \begin{array}{rrr}
2 &\,\,\, 1 &\,\,\, -4\\
2 &\,\,\, 1 &\,\,\, -4\\
-2 &\,\,\, -1 &\,\,\, 4
\end{array} \right] \quad \textup{ and } \quad
A_2 = \left[ \begin{array}{rrr}
0 &\,\,\, 5 &\,\,\, 6\\
-56 &\,\,\, -30 &\,\,\, 88\\
-272 &\,\,\, -152 &\,\,\, 484
\end{array}
\right]. $$
We find that $\det(M(u)) = \det(A_1 u_1 + A_2 u_2) =  96 (u_1 + 187 u_2) u_2^2$
 is not a power of a linear form. The combination
  $(-187)A_1+A_2$ is a rank two matrix in $\ker_\RR(Z)$.
\end{example}

Lemma~\ref{lem:bernd} is only a sufficient condition for the existence of a rank two matrix in $\ker_\RR(Z)$.
This means that when
$\det(M(u))$ is indeed the cubic of a linear form, $\ker_\RR(Z)$ may or may not
contain a rank two matrix, as the following examples show.

\begin{example} \label{ex:cubic linear form examples}
(i) Consider the seven point correspondences given by
$$
Y = \left[ \begin{array}{ccc}
0 &\,\,\, 1 & \,\,\,1 \\
1 &\,\,\, 0 &\,\,\, \,1\\
 2 &\,\,\, 5 &\,\,\, \,1  \\
 3 &\,\,\, \frac{-5}{12} &\,\,\, \,1 \\
 4 &\,\,\, 7 &\,\,\, \,1 \\
 5 &\,\,\, \frac{-11}{8} &\,\,\, \,1 \\
 6 &\,\,\, 9 &\,\,\, \,1
 \end{array}
\right] \quad \textup{ and } \quad
X = \left[ \begin{array}{ccc}
\frac{-1}{5} &\,\,\, -1 &\,\,\, 1 \\
-1 &\,\,\, -7 &\,\,\, 1\\
\frac{-1}{2} &\,\,\, 0 &\,\,\, 1\\
-2 &\,\,\, -12 &\,\,\, 1 \\
\frac{-57}{4} &\,\,\, 8 &\,\,\, 1 \\
2 &\,\,\, 8 &\,\,\, 1 \\
0 &\,\,\, \frac{-1}{9} &\,\,\, 1
\end{array}
\right].
$$
Again, $\rank(Z)=7$ and $\ker_\RR(Z)$ is spanned by the $3 \times 3$ identity matrix $I$ and
$A_2 = \left[ \begin{array}{ccc} 0 & 1 & 2 \\ 5 & 4 & -2 \\ -15 & 3 & 11
\end{array} \right]$ where $\rank(A_2)=3$. Check that $\det(Iu_1 + A_2u_2) = (u_1 + 5u_2)^3$. The
unique (up to scale) rank-deficient matrix in $\ker_\RR(Z)$ is the rank one matrix
$$A_2-5I = \left[ \begin{array}{ccc}
-5 & 1 & 2 \\ 5 & -1 & -2 \\ -15 & 3 & 6 \end{array} \right].$$

(ii) Now consider the seven point correspondences given by
$$
Y = \left[ \begin{array}{ccc}
1 &\,\,\, 0 & \,\,\,1 \\
\f{1}{3} &\,\,\, 0 &\,\,\, \,1\\
 \f{1}{3} &\,\,\, -1 &\,\,\, \,1  \\
1 &\,\,\, -1 &\,\,\, \,1 \\
 \f{1}{2} &\,\,\, -1 &\,\,\, \,1 \\
 4 &\,\,\, -2 &\,\,\, \,1 \\
 2 &\,\,\, -2 &\,\,\, \,1
 \end{array}
\right] \quad \textup{ and } \quad
X = \left[ \begin{array}{ccc}
-1 &\,\,\, 0 &\,\,\, 1 \\
-3 &\,\,\, 0 &\,\,\, 1\\
6 &\,\,\, 3 &\,\,\, 1\\
0 &\,\,\, 1 &\,\,\, 1 \\
2 &\,\,\, 2 &\,\,\, 1 \\
0 &\,\,\, \f{1}{2}&\,\,\, 1 \\
\f{1}{2} &\,\,\, 1 &\,\,\, 1
\end{array}
\right].
$$
Here ${\rm ker}_\RR (Z)$ is spanned by the matrices
$A_1 =
 \begin{bmatrix} 1 & 0 & 0 \\ 0 & 1 & 0 \\ 0 & 0 & 1\end{bmatrix}
$ and $A_2 = \begin{bmatrix} 0 & 1 & 0 \\ 0 & 0 & 1 \\ 0 & 0 & 0\end{bmatrix}$, and
$\det(A_1 u_1 +A_2 u_2) = u_1^3$. Here, $A_2$ is a fundamental matrix of $Z$.
\end{example}

To finish, we need to be able to decide if $\ker_{\RR}(Z)$ contains a rank two matrix when $\det$ restricted to $\ker_{\RR}(Z)$ is of the form $(b^\top u)^3$ for a non-zero vector $b$.
Here we write $\ker_\RR(Z)$ as the space of matrices
$M(u) = \sum_{i=1}^t A_i u_i$.
The rank deficient matrices in $\ker_\RR(Z)$ correspond to solutions $u$ of $\det(M(u))=0$.
Since $\det(M(u)) = (b^\top u)^3$, for any $M(u)$ in our situation, the rank-deficient ones are precisely those in the set
\begin{align}
\mathcal{M} := \left\{ M(u) : u \in b^\perp \right\}
\quad \hbox{where} \,\,\, b^\perp := \{u \in \RR^t\ : b^\top u = 0\}.
\end{align}

\begin{remark}\label{rmk:projector}
The hyperplane $b^\perp$ consists of all vectors $\,u - \frac{b^\top u}{b^\top b} b \,$ where $ u \in \RR^t$.
\end{remark}

Therefore, $\mathcal{M} = \left\{ M\bigl(u - \frac{b^\top u}{b^\top b} b \bigr) : u \in \RR^{t} \right\}$. Now, recall that for a $3 \times 3$ matrix to be of rank 2, at least one of its $2 \times 2$ minors $q_{ij}$ must be non-zero. Thus we have:

\begin{lemma}\label{lem:cubic case}
If $\det$ restricted to $\ker_{\RR}(Z)$ is of the form $(b^\top u)^3$ for a non-zero vector $b$ then $\ker_{\RR}(Z) \cap \mathcal{F} = \emptyset$ if and only if $ q_{ij}\!\left(M\bigl(u - \frac{b^\top u}{b^\top b} b\bigr)\right) = 0$
for all $1 \leq i,j \leq 3$.
\end{lemma}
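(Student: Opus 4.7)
The plan is to translate the condition $\ker_\RR(Z) \cap \mathcal{F} = \emptyset$ into a polynomial identity in $u$, via the parametrization of the rank-deficient locus that the hypothesis $\det(M(u)) = (b^\top u)^3$ makes available.

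First, I would observe that, under this hypothesis, a matrix $M(u) \in \ker_\RR(Z)$ is rank-deficient precisely when $\det(M(u)) = (b^\top u)^3 = 0$, i.e., when $u \in b^\perp$. Hence $\mathcal{M}$ is exactly the set of rank-deficient matrices in $\ker_\RR(Z)$. By Remark~\ref{rmk:projector}, the map $u \mapsto u - \frac{b^\top u}{b^\top b} b$ is the orthogonal projection of $\RR^t$ onto $b^\perp$, so it is surjective, and $\mathcal{M}$ equals $\{ M(u - \frac{b^\top u}{b^\top b} b) : u \in \RR^t \}$ as displayed just before the lemma.

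Now every element of $\mathcal{M}$ has rank at most $2$, while membership in $\mathcal{F}$ requires rank exactly $2$. Hence $\ker_\RR(Z) \cap \mathcal{F} \neq \emptyset$ if and only if some matrix $N \in \mathcal{M}$ has at least one non-zero $2 \times 2$ minor $q_{ij}(N)$. Negating, $\ker_\RR(Z) \cap \mathcal{F} = \emptyset$ if and only if every matrix in $\mathcal{M}$ has all nine $2 \times 2$ minors equal to zero, that is, every matrix in $\mathcal{M}$ has rank at most $1$.

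Substituting the explicit parametrization of $\mathcal{M}$, this last condition is exactly the requirement that $q_{ij}(M(u - \frac{b^\top u}{b^\top b} b)) = 0$ for every $u \in \RR^t$ and every $1 \leq i,j \leq 3$. Since the left-hand side is a polynomial in the entries of $u$, vanishing on all of $\RR^t$ is equivalent to it being the zero polynomial, which yields the stated form of the condition. The only delicate point is bookkeeping the distinction between rank $\leq 2$ (which all of $\mathcal{M}$ automatically satisfies) and rank exactly $2$ (which is what is needed to belong to $\mathcal{F}$); once this is made explicit, the proof reduces to reading off what has already been set up immediately before the lemma.
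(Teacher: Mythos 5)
Your argument is correct and is essentially the same as the paper's, which presents the lemma as an immediate consequence of the buildup immediately preceding it (the identification of $\mathcal{M}$ with the rank-deficient matrices in $\ker_\RR(Z)$ and the parametrization via the orthogonal projection onto $b^\perp$). You make explicit the rank-$\leq 2$ versus rank-exactly-$2$ bookkeeping and the polynomial-identity reading of the vanishing condition, which are left implicit in the paper but are exactly what is intended.
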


\begin{example}
We illustrate Lemma \ref{lem:cubic case} for Example~\ref{ex:cubic linear form examples} (ii).
Recall that $M(u) = A_1 u_1 +A_2 u_2$ and $\det(M(u)) = u_1^3$.
Hence $b = (1, 0)^\top$ and
   $u - \frac{b^\top u}{b^\top b} b = (0, u_2)^\top$.
Thus $M\bigl(u - \frac{b^\top u}{b^\top b} b \bigr) =  A_2 u_2$
has rank $2$ for $u_2 \not=0$, and
 $A_2$ is the fundamental matrix.
\end{example}

We now have a complete procedure
to check if a given set of point correspondences $\{(x_i, y_i) \,:\, i=1,\ldots,m \}$ has a fundamental matrix. Our algorithm is displayed below.
Its input is the matrix $Z$ that is derived from the points $x_i$ and $y_i$ as in (\ref{3matrices}).

\begin{algorithm}
\caption{Existence check for the fundamental matrix}
\begin{algorithmic}
\Require{$Z \in \RR^{m \times 9}$}
\If {$\det\left(Z^\top Z\right) \neq 0$}
\State {Return False.}
\EndIf

 \State{Compute a basis $A_1, \ldots, A_t \in \RR^{3 \times 3}$ for $\ker_\RR(Z)$}
 \State{$M(u) = \sum_{i=1}^t A_iu_i$}
 \State {$p(u) = \det\left(M(u)\right)$}
 \If {$p(u) = 0$}
    \If{$\forall i,j\ q_{ij}\! \left(M\left(u\right)\right) = 0$}
    \State{Return False.}
    \EndIf
 \Else
  	\If {$p(u)$ is of the form $(b^\top u)^3$ and
	$\forall i,j\ q_{ij}\! \left(M\bigl(u - \frac{b^\top u}{b^\top b} b\bigr)\right) = 0$}
    	\State{Return False.}

    \EndIf
  \EndIf
\State{Return True}
\end{algorithmic}
\label{algo:existenceOfF}
\end{algorithm}

We offer several comments.  Algorithm 1 has no restriction on $m$ or the rank of $Z$.
The case $\rank(Z)=9$ is excluded at the start.
If $\rank(Z) = 8$, then $M(u_1) = A u_1$  where $a$ is a generator of the one dimensional vector space $\ker_\RR(Z)$. Now, $Z$ has a fundamental matrix if and only if $\rank(A) = 2$. In the steps of the algorithm, $p(u_1) =\det(M(u_1)) = \det(A u_1) = [(\det(A))^{1/3} u_1]^{3}$.  So, if $p(u_1) = 0$ then $\det(A) = 0$ and $Z$ has a fundamental matrix if and only if $A$ has a non-zero $2 \times 2$ minor. If $p(u_1) \neq 0$ then it is a cube of a linear form, with $b = (\det(A))^{1/3}$.
Here, $q_{11}\bigl(M(u_1 - \frac{b u_1}{b^2} b)\bigr) = q_{11}(M(0)) = 0$, so there is no
  fundamental matrix for $Z$. The remaining cases $\rank(Z) \le 7$ follow from Lemma~\ref{lem:basic F facts} and Theorem~\ref{thm:consequence for Z}.

Algorithm 1 can be expressed in terms of polynomials in $X$ and $Y$ since
the vectors in a basis of $\ker_\RR(Z)$ have coordinates that are polynomials in $X$ and $Y$.
This uses the notion of Cramer vectors which we alluded to in Remark~\ref{rmk:cramer vectors}.
We already saw this in action for the $\rank(Z) = 8$ case. The complexity of calculating a basis for $\ker_\RR(Z)$ is polynomial in $m$  and the size of the correspondences. All other steps run in constant time. This gives:

\begin{corollary} Given $m$ point correspondences $\{(x_i,y_i) \,:\, i=1,\ldots, m \}$, it
can be decided in polynomial time whether they have a fundamental matrix.
\end{corollary}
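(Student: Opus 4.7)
The plan is to argue that every step of Algorithm 1 runs in time polynomial in $m$ and the bit-size of the input correspondences, and then invoke the correctness of the algorithm which was established through Lemmas~\ref{lem:rank=8forF}, \ref{lem:basic F facts}, \ref{lem:sameer}, Theorems~\ref{thm:rank4}, \ref{thm:consequence for Z}, and Lemma~\ref{lem:cubic case} together with the comments immediately following the algorithm.

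First I would set up the size parameters: the input is $2m$ points in $\RR^2$, each of bounded bit-size, from which $Z \in \RR^{m \times 9}$ is assembled in time $O(m)$. The check $\det(Z^\top Z) \neq 0$, which tests whether $\rank(Z) = 9$, can be replaced by computing $\rank(Z)$ directly via Gaussian elimination on $Z$; this takes time polynomial in $m$ (and in the bit-complexity of the entries, using exact rational arithmetic with standard bounds on intermediate expression swell). In the same pass, one extracts a basis $A_1, \ldots, A_t \in \RR^{3 \times 3}$ of $\ker_\RR(Z)$, where $t = 9 - \rank(Z) \leq 9$ is a constant.

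Next I would observe that once the basis is computed, every remaining step involves only the symbolic $3 \times 3$ matrix $M(u) = \sum_{i=1}^t A_i u_i$ with $t \leq 9$ formal variables. The determinant $p(u) = \det(M(u))$ is a homogeneous cubic in at most $9$ variables, so it has at most $\binom{9+2}{2} = 55$ monomials; computing its coefficients and testing $p(u) = 0$ therefore takes constant time. Similarly the nine $2 \times 2$ minors $q_{ij}(M(u))$ are quadratic forms in at most $9$ variables, and checking whether all are identically zero is a constant-size task. Deciding whether the non-zero cubic $p(u)$ has the form $(b^\top u)^3$ amounts to checking a constant number of polynomial identities on its coefficients (equivalently, that $p$ factors as the cube of a linear form), and if so, extracting $b$ is a constant-time operation. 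The projection $u \mapsto u - \frac{b^\top u}{b^\top b} b$ substituted into the $q_{ij}$ again yields polynomials of constant size.

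Putting these observations together, the only step whose cost scales with $m$ is the linear-algebra computation of $\rank(Z)$ and of a basis of $\ker_\RR(Z)$, which is polynomial in $m$ and in the input bit-size. All subsequent symbolic manipulations take $O(1)$ time. The expected obstacle in writing this out is purely bookkeeping: keeping track of exact rational arithmetic and citing a standard bound (for example, the Edmonds bound) so that the bit-complexity of intermediate values in the Gaussian elimination stays polynomial. Combining this complexity analysis with the correctness of Algorithm~\ref{algo:existenceOfF}, already verified in the comments following its statement, yields the claimed polynomial-time decision procedure.
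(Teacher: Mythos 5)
Your proposal is correct and follows essentially the same route as the paper: the paper's justification for the corollary is precisely the observation that computing a basis of $\ker_\RR(Z)$ is polynomial in $m$ and the input size, while all remaining steps of Algorithm~1 operate on objects of constant size (at most $9$ variables, cubic and quadratic forms in those variables) and hence run in $O(1)$ time. You add some worthwhile detail the paper elides — replacing the $\det(Z^\top Z)$ test with a direct rank computation, explicitly bounding the number of monomials, and noting that intermediate bit-sizes in exact Gaussian elimination stay polynomially bounded — but these are refinements of, not a departure from, the paper's argument.
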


Note that could use Theorem~\ref{thm:rank4} to exit early in Algorithm 1 if $\rank(Z) \le 4$. 
This would not affect the overall running time or correctness of the algorithm.

\section{The Essential Matrix}
\label{sec:essential}
We now turn our attention to essential matrices.
Let $P_1$ and $P_2$ be two non-coincident cameras.
We write $x_i \sim P_1 w_i$ and $y_i \sim P_2 w_i$ for the images of the world point $w_i$. Let us now assume that the calibration matrices $K_1$ and $K_2$ for $P_1$ and $P_2$ are known. Then recall from the introduction that an essential matrix $E$ of the two normalized cameras $K_1^{-1}P_1$ and $K_2^{-1}P_2$ satisfies the epipolar constraints $\wh{y}_i^\top E  \wh{x}_i \,=\, 0$
where
$\widehat{x}_i = K_1^{-1} x_i$ and $\widehat{y}_i = K_2^{-1} y_i$ are the normalized image coordinates.

The set of essential matrices is the set of real $3 \times 3$ matrices of rank two with two equal (non-zero) singular values~\cite{maybank-faugeras}. We denote the set of essential matrices by
\begin{equation}
\label{eq:singvalues}
	\mathcal{E}_\RR = \{e \in \PP^8_\RR : \sigma_1(E) = \sigma_2(E)
	\,\,\,{\rm and} \,\,\, \sigma_3(E) = 0\},
\end{equation}
where $\sigma_i(E)$ denotes the $i^{\rm{th}}$ singular value of the matrix $E$. This in particular implies that
$\mathcal{E}_\RR$ is contained entirely in $\mathcal{R}_2 \setminus \mathcal{R}_1$.

Demazure~\cite{demazure} showed
\begin{align} \label{E_describe}
\mathcal{E}_\RR \,= \bigl\{e \in \PP^8_\RR : p_j(e) = 0 \,\,\, {\rm for} \,\,\, j = 1, \hdots, 10 \bigr\},
\end{align}
where the $p_j$'s are homogeneous polynomials of degree three defined as
\begin{align}
\begin{bmatrix}
p_1 & p_2 & p_3 \\
p_4 & p_5 & p_6\\
p_7 & p_8 & p_9
\end{bmatrix} &:=  2 EE^\top E - {\rm Tr}(EE^\top) E,  \label{eq:tracecubics} \textup{ and }\\
p_{10} &:= \det(E). \label{eq:det}
\end{align}

Passing to the common complex roots of the cubics $p_1, \ldots, p_{10}$, we get
\begin{align}
\mathcal{E}_\CC := \{e \in \PP^8_\CC : p_j(e) = 0, \forall j = 1, \hdots, 10\}.
\end{align}
This is an irreducible projective variety
with $\dim(\mathcal{E}_\CC) = 5$ and $\degree(\mathcal{E}_\CC) = 10$~\cite{demazure}.
We are primarily interested in the corresponding
real projective variety $\mathcal{E}_\RR \subset \PP^8_\RR$.

\begin{remark}
It is also possible to see from the Demazure cubics that $\mathcal{E}_\RR  \cap \mathcal{R}_1 = \emptyset$.
A Gr\"obner basis computation shows that the nine sum of squares polynomials $(E_{i1}^2+E_{i2}^2+E_{i3}^2)(E_{1j}^2+E_{2j}^2+E_{3j}^2)$ vanish on the intersection. The vanishing of
one such polynomial implies that either the $i$th row or $j$th column of $E$ is zero, and the vanishing of all nine polynomials imply that all entries of $E$ must be zero.
\end{remark}

Our data consists of  $m$ point correspondences $\{ (x_i, y_i), \,\,i=1,\ldots,m\}$.
We now assume that $x_i$ and $y_i$ are normalized image coordinates.
We represent the correspondences by the matrix $Z$ in \eqref{3matrices}.
As in the uncalibrated case,  we can write the epipolar constraints  as $Ze = 0$.
 Thus  $Z$ has an essential matrix if and only if
\begin{align} \label{eq:exists E}
\ker_\RR(Z) \,\cap\, \mathcal{E}_\RR\,\, \neq \,\, \emptyset.
\end{align}
Hence the existence of an essential matrix for the given correspondences
is equivalent to the intersection of a subspace with a fixed real projective variety being non-empty.

If $\rank(Z) = 9$ then $Z$ has no essential matrix.
The case $\rank(Z) = 8$ is easy, since $\ker_\RR(Z)$ is just a point in $\mathbb{P}_\RR^8$.
If an essential matrix exists then it is unique:

\begin{lemma} \label{lem:E_for_rank8}
If $\rank(Z)=8$ then $Z$ has an essential matrix if and only if the unique point $a \in \ker_\RR(Z)$,
found from $Z$ by Cramer's rule, satisfies $p_1(a) = \cdots = p_{10}(a) = 0$.
\end{lemma}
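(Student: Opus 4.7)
The plan is straightforward because $\rank(Z)=8$ collapses the projective linear subspace $\ker_\RR(Z) \subset \PP_\RR^8$ to a single point, reducing the intersection question to a point membership question. First I would observe that since $Z \in \RR^{m \times 9}$ has rank $8$, its right kernel is a one-dimensional subspace of $\RR^9$, so $\ker_\RR(Z)$ is a single point in $\PP_\RR^8$. A concrete representative can be extracted by Cramer's rule exactly as in Example~\ref{ex:m_equals_eight} and Remark~\ref{rmk:cramer vectors}: pick any $8 \times 9$ submatrix $Z_S$ of rank $8$ (which exists by hypothesis), and set $a^\top = ((-1)^{i-1}\det(Z_S^i))_{i=1}^9$. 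Any such choice produces the same projective point, and $a \neq 0$ because $Z_S$ has full row rank.

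Second, I would invoke the translation of the existence problem into a geometric intersection: by (\ref{eq:exists E}), $Z$ has an essential matrix if and only if $\ker_\RR(Z) \cap \mathcal{E}_\RR \neq \emptyset$. Since $\ker_\RR(Z) = \{a\}$, this is equivalent to $a \in \mathcal{E}_\RR$. Finally, by Demazure's description (\ref{E_describe}), membership $a \in \mathcal{E}_\RR$ is characterized by the simultaneous vanishing $p_1(a) = \cdots = p_{10}(a) = 0$. Chaining these equivalences yields the claim.

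There is no substantive obstacle here; the lemma is essentially a definitional unpacking once one notes that $\rank(Z) = 8$ forces uniqueness of $a$. The only subtlety worth flagging is that verifying $p_{10}(a) = \det(A) = 0$ alone is not sufficient---that would only place $A$ in $\mathcal{R}_2$---so all ten Demazure cubics must be checked to land in the strictly smaller variety $\mathcal{E}_\RR \subset \mathcal{R}_2 \setminus \mathcal{R}_1$.
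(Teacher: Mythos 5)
Your proof is correct and takes exactly the approach the paper intends: the paper itself gives no formal proof of this lemma, only the remark that "the case $\rank(Z)=8$ is easy, since $\ker_\RR(Z)$ is just a point in $\PP_\RR^8$," which is precisely your reduction of the intersection condition \eqref{eq:exists E} to point membership in $\mathcal{E}_\RR$, followed by Demazure's characterization \eqref{E_describe}. Your closing remark about why checking $p_{10}=\det$ alone would be insufficient is a sensible addition, though not needed for the equivalence.
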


We now consider what happens for $\rank(Z) \leq 7$.  First, we have the following analog of Lemma~\ref{lemma:chum} whose proof is exactly as before.

\begin{lemma}
If $m-2$ or more of the point correspondences are related by a matrix $R \in SO(3)$, i.e., for at least $m-2$ of the indices $i \in [m]$, $y_i \sim R x_i$, then there exists an essential matrix relating all the correspondences.
\end{lemma}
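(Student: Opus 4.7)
The plan is to mimic the proof of Lemma~\ref{lemma:chum} (the uncalibrated analog), replacing the homography $H$ by the rotation $R \in SO(3)$ and the decomposition $F=[b]_\times H$ by the decomposition $E = [t]_\times R$ of an essential matrix (see Section~\ref{sec:epipolar}). The only thing one needs to confirm is that the algebraic manipulation of the epipolar constraint does not use invertibility of $H$ in any essential way beyond what $R \in SO(3)$ already provides.

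More precisely, I would start by recalling that every essential matrix can be written as $E = [t]_\times R$ for some nonzero $t \in \RR^3$ and some $R \in SO(3)$, and that conversely any such product is an essential matrix. Fix the rotation $R$ given by hypothesis. Then the epipolar constraints $y_i^\top E x_i = 0$ for $i \in [m]$ become
\begin{align*}
0 \,=\, y_i^\top [t]_\times R x_i \,=\, y_i^\top (t \times R x_i) \,=\, -t^\top (y_i \times R x_i),
\end{align*}
using the identity $p^\top (q \times r) = -q^\top (p \times r)$ exactly as in Lemma~\ref{lemma:chum}. Hence the existence of an essential matrix of the form $[t]_\times R$ reduces to finding a nonzero $t \in \RR^3$ with
\begin{align*}
t^\top \begin{bmatrix} \cdots & y_i \times R x_i & \cdots \end{bmatrix} \,=\, 0,
\end{align*}
i.e.\ to showing that the $3 \times m$ matrix with columns $y_i \times R x_i$ has rank strictly less than $3$.

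The final step is to observe that whenever $y_i \sim R x_i$, the column $y_i \times R x_i$ vanishes. By hypothesis this happens for at least $m-2$ of the indices, so at most two columns of the $3 \times m$ matrix are nonzero, which forces its rank to be at most $2$. Therefore its left kernel contains a nonzero vector $t$, and $E := [t]_\times R$ is an essential matrix satisfying the epipolar constraints for all $m$ correspondences. The only mild subtlety, and really the one point worth double-checking, is that the resulting $t$ is nonzero so that $E \ne 0$; this is immediate because the left kernel of a $3 \times m$ matrix of rank at most $2$ is at least one-dimensional.
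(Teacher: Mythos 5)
Your proof is correct and matches the paper exactly: the paper simply notes that the essential-matrix analog of Lemma~\ref{lemma:chum} has a proof ``exactly as before,'' i.e.\ replacing $F=[b]_\times H$ by $E=[t]_\times R$ and running the same scalar-triple-product and rank argument, which is precisely what you wrote out.
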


The following is then immediate.

\begin{corollary}
If $\rank(Z) \le 3$, then $Z$ has an essential matrix.
\end{corollary}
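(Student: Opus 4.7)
The plan is to mimic the proof of Theorem~\ref{thm:rank4}, replacing the role of homographies with rotation matrices and invoking the preceding lemma about correspondences related by an element of $SO(3)$.

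First I would reduce to the case $m = \rank(Z) \leq 3$. Exactly as in the proof of Theorem~\ref{thm:rank4}, if $\rank(Z) = k$ we may select $\tau \subseteq [m]$ of size $k$ with $\rank(Z_\tau) = k$, whence $\ker_\RR(Z) = \ker_\RR(Z_\tau)$, so $Z$ has an essential matrix if and only if $Z_\tau$ does. Thus it suffices to exhibit, for $m \leq 3$ correspondences, a rotation $R \in SO(3)$ that relates $y_i \sim R x_i$ for at least $m - 2$ indices $i$; the preceding lemma then produces the essential matrix.

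For $m \leq 2$ the requirement $m - 2 \leq 0$ is vacuous, so the identity matrix $R = I \in SO(3)$ works. The remaining case is $m = 3$, where we need a single index $i$ with $y_i \sim R x_i$. Here I would construct $R$ explicitly from the pair $(x_1, y_1)$: since $x_1, y_1$ are nonzero vectors in $\RR^3$ (being homogenizations of points in $\RR^2$), set $\hat{x} := x_1 / \|x_1\|$ and $\hat{y} := y_1 / \|y_1\|$. If $\hat{x} = \hat{y}$, take $R = I$; if $\hat{x} = -\hat{y}$, take $R$ to be rotation by $\pi$ about any axis orthogonal to $\hat{x}$; otherwise, take $R$ to be the rotation about the axis $\hat{x} \times \hat{y}$ by the angle between $\hat{x}$ and $\hat{y}$. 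In every case $R \hat{x} = \pm \hat{y}$, giving $R x_1 \sim y_1$, as required.

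No step is really an obstacle; the only mildly delicate point is verifying that a rotation rather than a general invertible map is available, but this is elementary linear algebra for a single vector pair in $\RR^3$. Combining the reduction with this construction and the preceding lemma completes the proof.
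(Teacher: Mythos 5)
Your proposal is correct and follows essentially the same route as the paper: reduce to $m = \rank(Z) \le 3$ via a kernel-preserving submatrix $Z_\tau$ exactly as in the proof of Theorem~\ref{thm:rank4}, then invoke the $SO(3)$ analogue of Lemma~\ref{lemma:chum} using the fact (which the paper states explicitly right after the corollary) that any single correspondence $(x_1,y_1)$ is related by a rotation. Your explicit axis--angle construction of $R$ is a fine way to justify that last fact; the paper simply asserts it.
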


Here we use that fact that given a pair of correspondences $(x,y)$, there is always a rotation matrix $R$ such that $y \sim Rx$.

Given the dimension and degree of $\mathcal{E}_\CC$,
Theorem~\ref{thm:intersections} implies
the following basic facts about the
complex version of~\eqref{eq:exists E}.

\begin{lemma} \label{lem:E_for_rank<=5}
\begin{enumerate}
\item If $\rank(Z) = 6$ or $7$ then $\ker_\CC(Z) \cap \mathcal{E}_\CC = \emptyset$ for generic data~$Z$.
\item If $\rank(Z) = 5$ then $\ker_\CC(Z) \cap \mathcal{E}_\CC \neq \emptyset$, and this intersection
consists of precisely  ten (complex) points when the given data $Z$ are generic.
\item If $\rank(Z)\leq 4$ then $\ker_\CC(Z) \cap \mathcal{E}_\CC$ is an infinite set.
For generic $Z$,  this intersection is an
irreducible variety of dimension $5-\rank(Z)$ and degree~$10$.
\end{enumerate}
\end{lemma}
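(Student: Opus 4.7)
The plan is to apply Theorem~\ref{thm:intersections} together with standard facts about generic linear sections of irreducible projective varieties. Throughout, let $k = \rank(Z)$, so that $\ker_\CC(Z) \subset \PP^8_\CC$ is a linear subspace of projective dimension $8 - k$, and let $d = \dim(\mathcal{E}_\CC) = 5$.

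For part (1), suppose $k \in \{6,7\}$. Then $\dim(\mathcal{E}_\CC) + \dim(\ker_\CC(Z)) = 5 + (8-k) \in \{6,7\}$, which is strictly less than $n=8$. Hence Theorem~\ref{thm:intersections} does not force a nonempty intersection, and in fact for a \emph{generic} linear subspace $L \subset \PP^8_\CC$ of dimension $8-k < 3 = 8 - d$, the intersection $L \cap \mathcal{E}_\CC$ is empty: a generic linear subspace of dimension less than the codimension of an irreducible variety misses that variety, which follows from projective elimination (parameter count) or equivalently from the fact that the incidence variety $\{(L,e) : e \in L \cap \mathcal{E}_\CC\}$ projects to the Grassmannian of such $L$ with fibers of negative expected dimension.

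For part (2), suppose $k=5$, so $\dim(\ker_\CC(Z)) = 3 = 8-d$. Since $\mathcal{E}_\CC$ is irreducible of dimension $5$, Theorem~\ref{thm:intersections} gives $\ker_\CC(Z) \cap \mathcal{E}_\CC \neq \emptyset$. For generic $Z$, $\ker_\CC(Z)$ is a generic linear subspace of complementary dimension $8-d$, and by the very definition of degree recalled in the excerpt, $|\ker_\CC(Z) \cap \mathcal{E}_\CC| = \degree(\mathcal{E}_\CC) = 10$.

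For part (3), suppose $k \le 4$, so $\dim(\ker_\CC(Z)) = 8-k \ge 4$. The sum $5 + (8-k) \ge 9 > 8$, so Theorem~\ref{thm:intersections} yields an infinite intersection. To upgrade the genericity statement, I would invoke Bertini's theorem: a generic hyperplane section of an irreducible projective variety of dimension $\geq 2$ is irreducible of dimension one less. Applied iteratively $k$ times to $\mathcal{E}_\CC$ (valid because $d-k = 5-k \ge 1$), a generic codimension-$k$ linear section is irreducible of dimension $5-k$. The degree statement follows from the classical fact that a proper linear section of a projective variety has the same degree as the variety itself (the intersection multiplicities add up to $\degree(\mathcal{E}_\CC) = 10$, and for generic $L$ the intersection is reduced), so the degree is preserved throughout the slicing.

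The main potential obstacle is the genericity statement in part (3): one must check that $\mathcal{E}_\CC$ satisfies the hypothesis needed for Bertini (irreducibility and dimension at least $2$ at each stage) and that the generic linear section is transverse, so that the degree is exactly preserved rather than dropping. Both are standard once irreducibility and $\dim(\mathcal{E}_\CC) = 5$, $\degree(\mathcal{E}_\CC) = 10$ from~\cite{demazure} are in hand, so the proof reduces to citing Theorem~\ref{thm:intersections} and Bertini.
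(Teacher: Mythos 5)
Your proposal is correct and follows the same approach as the paper, which offers no explicit proof but simply states that the lemma follows from Theorem~\ref{thm:intersections} together with $\dim(\mathcal{E}_\CC) = 5$ and $\degree(\mathcal{E}_\CC) = 10$. You supply the standard algebraic-geometry facts (Bertini, degree of a generic linear section, incidence-variety parameter count) that the paper's one-sentence justification implicitly invokes; note that both you and the paper gloss over the point that kernels of Kronecker-structured matrices $Z$ do not sweep out the entire Grassmannian, so treating $\ker_\CC(Z)$ for generic \emph{data} as a generic linear subspace of $\PP^8_\CC$ is an additional step that neither argument addresses.
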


For an essential matrix of $Z$ to exist we need a real point in $\ker_\CC(Z) \cap \mathcal{E}_\CC$. Therefore, for all ranks of $Z$,
the existence of an essential matrix for $Z$ is  the delicate question  of existence of real solutions for polynomial equations.  Lemma~\ref{lem:E_for_rank8} takes care of the situation in which $\rank(Z)=8$. However,
certifying the existence of a real point in $\ker_\CC(Z) \cap \mathcal{E}_\CC$ is
a considerably harder problem when $\rank(Z) \leq 7$. This, in theory, can be characterized by the {\em real Nullstellensatz} \cite{MarshallBook} and checked degree by degree via {\em semidefinite programming}. However, this is a case by case computation,
rather than a formula in terms of $Z$.

Before going further, one might wonder if there is, in fact, always a real point in $\ker_\CC(Z) \cap \mathcal{E}_\CC$.  We now show that
sometimes, no point in
$\ker_\CC(Z) \cap \mathcal{E}_\CC$ is real.

\begin{example} \label{ex:all complex Es}
We verified using \verb+Maple+ that the following set of five point correspondences has no essential matrix. All 10 points in
$\ker_\CC(Z) \cap \mathcal{E}_\CC$ are complex.
\begin{align}
Y = \left [\begin{array}{rrr}
2 & 0 & 1 \\
5 & 4 & 1 \\
 9 & 6 & 1 \\
 2 & 5 & 1 \\
 1 & 4 & 1 \\
\end{array}\right],\quad
X = \left[\begin{array}{rrr}
3 & 0 & 1 \\
9 & 1 & 1 \\
1 & 2 & 1 \\
8 & 8 & 1 \\
4 & 8 & 1
\end{array}\right]
\end{align}
\end{example}

By Lemmas~\ref{lem:E_for_rank8} and \ref{lem:E_for_rank<=5}, it remains to settle the cases of $\rank(Z)=6 \textup{ and } 7$.  The rest of the paper will be devoted to these two cases.

For $\rank(Z) = 7$, we will establish conditions on $Z$ such that  $\ker_\RR(Z) \cap \mathcal{E}_\RR \not= \emptyset $.  For $\rank(Z) = 6$, we will have to settle for characterizing the data $Z$ that satisfy $\ker_\CC(Z) \cap \mathcal{E}_\CC \not= \emptyset $. This is certainly necessary for an essential matrix of $Z$ to exist, and, as we
shall see, already involves non-trivial mathematics. These ranks can be dealt with via a tool from algebraic geometry that we explain next.

\subsection{The Chow form of a projective variety}
Let $\mathcal{V} \subset \PP_\CC^n$ be an irreducible projective variety of dimension $d$ and degree $\delta$. Recall that a generic
subspace $L \subset \PP_\CC^n$ of dimension $n-d$ will intersect $\mathcal{V}$ in $\delta$ points up to multiplicity, but if
$\dim(L) < n-d$ then $L$ will usually not intersect $\mathcal{V}$. We consider all subspaces of dimension exactly $n-d-1$ that
do intersect $\mathcal{V}$. To be concrete, let us express an $(n-d-1)$-dimensional subspace $L$ as $\ker_\CC(A)$ for some $A \in \CC^{(d+1) \times (n+1)}$. Then there is a single irreducible polynomial in the entries of $A$ called the {\em Chow form} of $\mathcal{V}$, denoted as $\textup{Ch}_{\mathcal{V}}(A)$, that defines the set of all $(n-d-1)$-dimensional subspaces $L$ that intersect $\mathcal{V}$.
More explicity, for a scalar matrix $A \in  \CC^{(d+1) \times (n+1)}$, ${\rm Ch}_\mathcal{V}(A) = 0$ if and only if
$\ker_\CC(A) \cap \mathcal{V} \neq \emptyset$. Hence ${\rm Ch}_\mathcal{V}$ certifies exactly when an $(n-d-1)$-dimensional subspace
intersects $\mathcal{V}$.
 Further, ${\rm Ch}_{\mathcal{V}}(A)$ is a homogeneous polynomial of degree $(d+1)\delta$ in the entries of $A$.
The computation of a Chow form is a problem in {\em elimination theory} \cite[Chapter 3]{cox2007ideals} and we illustrate the method in Example~\ref{ex:twisted cubic chow form}.
An accessible introduction to Chow forms can be found in \cite{dalbec-sturmfels}, with
algorithms for computing them in Section 3.1. For an in-depth treatment see \cite[Section 3.2 B]{gelfand-et-al}.

\begin{example} \cite[1.2]{dalbec-sturmfels} \label{ex:twisted cubic chow form}
Consider the cubic curve in $3$-space given parametrically as
$$\mathcal{C} \,\,=\,\, \{ (\lambda^3\,,\,\lambda^2 \mu\,,\,\lambda \mu^2\,,\,\mu^3) \in \PP_\CC^3 \,:\, \lambda, \mu \in \CC\}.$$
Here $n=3, d=1$ and $\delta=3$.
The Chow form ${\rm Ch}_{\mathcal{C}}$ characterizes the set of all lines
   $L \subset \PP_\CC^3$ that intersect $\mathcal{C}$.   Writing $L = \ker_\CC \begin{bmatrix} a_{11} & a_{12} & a_{13} & a_{14}\\ a_{21} & a_{22} & a_{23} & a_{24} \end{bmatrix}$,
\begin{equation}
\label{eq:Bezout}
  {\rm Ch}_{\mathcal{C}}(A) \,\, = \,\,
\det \begin{bmatrix}
a_{11} a_{22} - a_{12} a_{21} \ & \ a_{11} a_{23} - a_{13} a_{21} \ & \  a_{11} a_{24} - a_{14} a_{21} \\
a_{11} a_{23} - a_{13} a_{21} \ & \ \substack{a_{11} a_{24} - a_{14} a_{21}\\
+ a_{12} a_{23} - a_{13} a_{22}} \ & \  a_{12} a_{24} - a_{14} a_{22} \\
a_{11} a_{24} - a_{14} a_{21} \ & \ a_{12} a_{24} - a_{14} a_{22}  \ & \  a_{13} a_{24} - a_{14} a_{23} \\
\end{bmatrix} .
\end{equation}
This has degree $6 = 2 \cdot 3 = (d+1) \cdot \delta$ as a polynomial in the entries of $A$.
We have ${\rm Ch}_{\mathcal{C}}(A) = 0$ if and only if
 $L \cap \mathcal{C} \neq \emptyset$ if and only if there exists $(\lambda, \mu) \neq (0,0)$ with
$$a_{11} \lambda^3 + a_{12}\lambda^2 \mu + a_{13}\lambda \mu^2 + a_{14} \mu^3
\,\, =\,\, a_{21} \lambda^3 + a_{22}\lambda^2 \mu + a_{23}\lambda \mu^2 + a_{24} \mu^3
\,\,=\,\, 0.$$
These two binary cubics have a common root if and only if their {\em resultant} vanishes. Thus in this example,
${\rm Ch}_{\mathcal{C}}$ shown in \eqref{eq:Bezout} is given by the B\'ezout formula for the
resultant of two binary cubics \cite[\S 4.1]{bernd-cbmsbook}
\end{example}

The resultant of two binary cubics mentioned above carries further information that will be useful for us later.
The two cubics share a unique common root if and only if the {\em B\'ezout matrix} shown in \eqref{eq:Bezout} (whose determinant is
${\rm Ch}_{\mathcal{C}}(A)$) has rank exactly two \cite[pp. 42]{heinig-rost}. In this case, this unique common root is real. Otherwise, the two cubics (assuming they are not scalar multiples of each other), share two roots and hence their greatest common divisor (gcd) is a quadratic polynomial. The roots of this quadratic are the two common roots and they are real or complex depending on sign of the discriminant of the quadratic gcd.

Note that we can also determine whether $\mathcal{E}_\CC \cap \ker_\CC(Z) \neq \emptyset$ by computing a Gr\"obner basis of the set of Demazure polynomials and the epipolar constraints. By Hilbert's Nullstellensatz, the intersection is empty if and only if the Gr\"obner basis contains  $1$. However, this is a blackbox that needs to be invoked for each $Z$. The Chow form on the other hand is a fixed polynomial, depending on
$\mathcal{E}_\CC$, that just needs to be computed once.  Both approaches are based on elimination theory.

\subsection{$\rank(Z) = 6$}

Recall that $\mathcal{E}_\CC \subset \PP_\CC^8$ is an irreducible
variety  of dimension $5$ and degree $10$.
Hence, its Chow form ${\rm Ch}_{\mathcal{E}_\CC}$ is a polynomial of degree $60=6 \times 10$ in the entries of a
$6 \times 9$ matrix $A$.
 This polynomial tells precisely when $\ker_\CC(Z) \cap \mathcal{E}_\CC \neq \emptyset$.

\begin{theorem} \label{thm:E_for_rank6}
Suppose $\rank(Z) = 6$, and
let $Z'$ be any $6\times 9$ submatrix of $Z$ with $\rank(Z') = 6$.
Then $\,\ker_\CC(Z) \cap \mathcal{E}_\CC \neq \emptyset\,$ if and only if
$\,{\rm Ch}_{\mathcal{E}_\CC}(Z') = 0$.
\end{theorem}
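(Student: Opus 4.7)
The plan is to reduce the statement to a direct invocation of the defining property of the Chow form established in the preceding subsection. The two main ingredients are (a) an elementary linear-algebra reduction from $Z$ to the $6\times 9$ submatrix $Z'$, and (b) the fact that $\ker_\CC(Z')$ has exactly the right projective dimension to be detected by ${\rm Ch}_{\mathcal{E}_\CC}$.

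First I would verify that $\ker_\CC(Z) = \ker_\CC(Z')$. Since the rows of $Z'$ are a subset of the rows of $Z$, every vector annihilated by $Z$ is annihilated by $Z'$, giving the inclusion $\ker_\CC(Z)\subseteq \ker_\CC(Z')$. Both kernels have $\CC$-vector-space dimension $9-6=3$ by hypothesis, so the inclusion is an equality. Viewed projectively in $\PP_\CC^8$, the common kernel is therefore a linear subspace of projective dimension~$2$.

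Next I would unpack the defining property of the Chow form for $\mathcal{V} = \mathcal{E}_\CC$. Recall $\mathcal{E}_\CC\subset \PP_\CC^8$ is irreducible with $n=8$, $d=\dim(\mathcal{E}_\CC)=5$, and $\delta = \degree(\mathcal{E}_\CC)=10$. Thus ${\rm Ch}_{\mathcal{E}_\CC}$ is the irreducible polynomial of degree $(d+1)\delta = 60$ in the entries of a $(d+1)\times(n+1) = 6\times 9$ matrix $A$ with the characterizing property that ${\rm Ch}_{\mathcal{E}_\CC}(A) = 0$ if and only if $\ker_\CC(A)\cap \mathcal{E}_\CC \neq \emptyset$, whenever $\ker_\CC(A)$ has projective dimension exactly $n-d-1 = 2$. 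The matrix $Z'$ satisfies this shape and rank hypothesis by assumption, so the property applies.

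Applying the Chow form to $A = Z'$ gives ${\rm Ch}_{\mathcal{E}_\CC}(Z') = 0 \iff \ker_\CC(Z')\cap \mathcal{E}_\CC \neq \emptyset$, and combining this with $\ker_\CC(Z) = \ker_\CC(Z')$ yields the theorem. No step is a serious obstacle: the content of the result lies in the existence and computability of ${\rm Ch}_{\mathcal{E}_\CC}$ itself, which was handled in the previous subsection; the theorem merely repackages that tool into a test on the data matrix $Z$. The one subtle point worth flagging is that the choice of the submatrix $Z'$ is immaterial precisely because all such rank-$6$ submatrices share the same kernel, so the vanishing of ${\rm Ch}_{\mathcal{E}_\CC}(Z')$ is independent of which rows one selects.
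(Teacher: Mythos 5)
Your argument is correct and follows the same route as the paper: establish $\ker_\CC(Z)=\ker_\CC(Z')$ from the rank hypothesis, then invoke the defining property of ${\rm Ch}_{\mathcal{E}_\CC}$ for $6\times 9$ matrices. The extra care you take in tracking $n$, $d$, $\delta$, and the projective dimension of the kernel is sound but matches the paper's proof in substance; nothing differs.
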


\begin{proof}
Since $\rank(Z)=6$, we have
$\ker_\CC(Z) = \ker_\CC(Z')$ for any $6 \times 9$ submatrix $Z'$ of rank six. We saw earlier that
${\rm Ch}_{\mathcal{E}_\CC}(Z') \neq 0$ if and only if $\ker_\CC(Z') \cap \mathcal{E}_\CC = \emptyset$.
\end{proof}

The Chow form ${\rm Ch}_{\mathcal{E}_\CC}$ is a polynomial of degree $60$ in the
$54$ entries of the matrix $Z$. It has total degree $120$ when expressed in
the coordinates of the points $x_i$ and $ y_i$.
So, this is a huge polynomial. It would be desirable to find a compact determinantal
representation as that in Example~\ref{ex:twisted cubic chow form}, but presently
we do not know such a formula. 

Nevertheless, we can use Gr\"obner basis methods to
compute specializations of ${\rm Ch}_{\mathcal{E}_\CC}$
to few unknowns. For instance, when the given
data $Z$ depends on one or two parameters, we may wish to identify
all parameter values for which an essential matrix exists.
The following example illustrates that application of the Chow form.

\begin{example}
Suppose we are given the following six point correspondences:
\begin{align}
Y = \left [\begin{array}{rrr}
 7 & u & 1\\
 u & 3 &1\\
1 & u &1 \\
 1 & 2 &1\\
2 &1 &1\\
0 & 0 &1
\end{array}\right],\quad
X = \left[\begin{array}{rrr}
0 & 0 &1 \\
1 &3 &1\\
3 & 1 &1\\
u & 3 &1\\
4 & u &1\\
u & 5 &1
\end{array}\right]
\end{align}
Here $u$ is a parameter. Each row of the
corresponding $6 \times 9$-matrix $Z$
depends linearly on $u$. We seek to find all values of $u$ such
that $Z$ has an essential matrix.

The specialized Chow form ${\rm Ch}_{\mathcal{E}_\CC}(Z)$
is a polynomial of degree $60$ in $u$. It equals
\begin{tiny}
  $$ \begin{matrix}
  43834375\,{u}^{60}-4638778750\,{u}^{59}+214224546125\,{u}^{58}-5531342198800\,{u}^{57}+
81821365650850\,{u}^{56} \\
-502421104086298\,{u}^{55}-5192732464033792\,{u}^{54}+
146734050586490146\,{u}^{53}-1303188047383685288\,{u}^{52}\\
+\,5411546095695094\,{u}^{51}+
126933034857782682824\,{u}^{50}-1402930174630501908410\,{u}^{49} \\
+4805512868239289501170
\,{u}^{48}+54867304381377375967458\,{u}^{47}-946256866425258876320166\,{u}^{46} \\
+ 6828426599313060876754686\,{u}^{45}-14450347209949489264602097\,{u}^{44}-
283933191518660124999399676\,{u}^{43} \\
+4047868311171449853710938821{u}^{42} {-}
24993281401965265098244703998{u}^{41} {+} 15587695078296602756618405320{u}^{40} \\
+
1137047767757534377636418813724\,{u}^{39}-10221999792035763131237695639122\,{u}^{38} \\
+ 32470234671439206982327689429148\,{u}^{37}+146425390395530050950765067367816\,{u}^{36} \\
- 2155690692554728936349965416515780\,{u}^{35}+10321568760300476059189682724192744\,{u}^{
34} \\
-10122800683736158018034953038141540\,{u}^{33}-174813509014464010527815531126605212
\,{u}^{32} \\
+1206087628271760015231955008103896036\,{u}^{31}-
3436313795747065945575941840957490076\,{u}^{30} \\
-1363676250453653423248357040906303140\,
{u}^{29}+55723992159307840885015261004346771669\,{u}^{28} \\
-
251215280255927508990611099884445351606\,{u}^{27}+
544358623829097341051489976774086071975\,{u}^{26}\\
+
14464241162693808080892004258636405076\,{u}^{25}-
4558449315974650797311321984710224631310\,{u}^{24}\\
+
17812787901602232003024937224176380416478\,{u}^{23}-
38397784240820238777284236938586287303108\,{u}^{22} \\
%\end{matrix} $$
%$$ \begin{matrix}
+ 43276773504778039106632494947326914780890\,{u}^{21}+
15276370159804502922351511587351667789056\,{u}^{20} \\
- 155244069230397477917662963858164136786434\,{u}^{19}+
229768326200857711686707816150873030008800\,{u}^{18} \\ +
211532036240929904169570577258077028799230\,{u}^{17}-
1889026744395570650381887483449664405667446\,{u}^{16} \\ +
5471833600393127801394443745385356177011802\,{u}^{15}-
11088767784865709214849131148497436984866334\,{u}^{14} \\ +
17979801985915093648219684662006298683132934\,{u}^{13}-
24545309594399326756168652083447288177606955\,{u}^{12} \\ +
28845541790643822200455517412186608641540816\,{u}^{11}-
29403994066308211264842586547936962131413241\,{u}^{10} \\ +
25967381331061345555920067081171263369815898\,{u}^{9}-
19732702778911021713799125233709800780863612\,{u}^{8} \\ +
12764162704811266367782020135497648044854016\,{u}^{7}-
6927125308120500599068517401329046792275082\,{u}^{6} \\ +
3094196986553177513668480578495229924251400\,{u}^{5}-
1107994372586109623425960280114395072430496\,{u}^{4} \\ +
306014985831913575104245493823780969019440\,{u}^{3}-
61264148720465639197895147611951283785808\,{u}^{2} \\ +
7923839113658896236409956717070751269760\,u-497978312038181379674839759765953720000.
  \end{matrix}
$$
\end{tiny}
This polynomial has $60$ distinct roots in $\CC$. Of these $60$ roots, precisely $14$ are real:
$$
\begin{small}
\begin{matrix}
-8.89730484, -8.11776207, -3.46539765, 1.14351885, 1.41344366, 1.45874375, 1.53593853, \\
1.57524505, \,3.44472181,\, 4.64395895, \,5.21709064, \,7.35719977, \,11.9592629, \, 19.0073381.
\end{matrix}
\end{small}
$$
For each of these $14$ values of $u$, the intersection
$\ker_\CC(Z) \cap \mathcal{E}_\CC$ consists of a unique point $e$ with real coordinates.
A lexicographic Gr\"obner basis reveals these coordinates as elements of $\QQ[u]$, so
we get a formula in terms of $u$ for the $14$ essential matrices~$E$.
\end{example}

Despite the large size of ${\rm Ch}_{\mathcal{E}_\CC}$, a formula for it 
 can be computed offline, once and for all, in a single preprocessing step.
Thus, from the point of view of computational complexity,
we can compute ${\rm Ch}_{\mathcal{E}_\CC}$ in constant time. 
For any given data set with $\rank(Z)=6$, we can then plug a suitable
$6 \times 9$-submatrix $Z'$ into that formula to decide whether
 ${\rm Ch}_{{\mathcal{E}_\CC}}(Z') = 0$, and hence whether
 $\ker_\CC(Z) \cap \mathcal{E}_\CC \neq \emptyset$, by Theorem~\ref{thm:E_for_rank6}.
This furnishes an algorithm that runs in polynomial time in $m$ for
deciding whether a given $Z$ has an essential matrix over~$\CC$.
%In the event that $\ker_\CC(Z) \cap \mathcal{E}_\CC \neq \emptyset$,
%deciding whether one of the points
%in $\ker_\CC(Z) \cap \mathcal{E}_\CC$ has all its coordinates real
%can be done in constant time, since this boils down to solving
%real polynomial equations of constant degree in a constant number of variables.

\subsection{$\rank(Z) = 7$} \label{subsec:rank_7}
Also in this case we can use the Chow form of $\mathcal{E}_\CC$ to
characterize when $\ker_\CC(Z) \cap \mathcal{E}_\CC \neq \emptyset$.
Pick a $7 \times 9$ submatrix $Z'$ of $Z$ such that $\ker_\CC(Z') = \ker_\CC(Z)$. Suppose $\ker_\CC(Z') \cap  \mathcal{E}_\CC \neq \emptyset$. Then $\ker_\CC(UZ') \cap  \mathcal{E}_\CC \neq \emptyset$ for all $6 \times 7$ matrices $U$ of rank six since $\ker(Z') \subseteq \ker(UZ')$. This implies that,
for a $6 \times 7$-matrix $U$ of variables,
 the Chow form of $\mathcal{E}_\CC$ must vanish when evaluated at the $6 \times 9$ matrix $UZ'$.

 Consider $\textup{Ch}_{\mathcal{E}_\CC}(UZ')$ as a polynomial in the
 $42$ variable entries of $U$. Let $\mathcal{P}$ be the set of coefficients of that polynomial.
 Thus $\mathcal{P}$ is a set of polynomials of degree $60$
   in the $63$ entries of $Z'$.  All these polynomials must vanish   when
$\ker_\CC(Z') \cap \mathcal{E}_\CC \neq \emptyset$.

\begin{theorem} The variety $\ker_\CC(Z) \cap \mathcal{E}_\CC $ is non-empty in $\PP_\CC^9$
if and only if, for all $7 \times 9$ submatrices $Z'$ of $Z$ of rank seven,
we have $p(Z') = 0$ for all $p \in \mathcal{P}$.
\end{theorem}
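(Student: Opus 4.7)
The plan is to prove both directions by translating the vanishing of the polynomials in $\mathcal{P}$ into the geometric statement that every projective $2$-plane in $\PP_\CC^8$ containing the line $\ell := \ker_\CC(Z)$ meets $\mathcal{E}_\CC$, and then to run a projection-from-$\ell$ argument that uses $\dim(\mathcal{E}_\CC)=5$ to conclude that $\ell$ itself must meet $\mathcal{E}_\CC$.

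For the forward direction, note that since $\rank(Z)=7$, every $7\times 9$ submatrix $Z'$ of $Z$ of rank seven satisfies $\ker_\CC(Z') = \ker_\CC(Z)$. Assuming $\ker_\CC(Z) \cap \mathcal{E}_\CC \neq \emptyset$, the inclusion $\ker_\CC(Z') \subseteq \ker_\CC(UZ')$ that holds for any $6\times 7$ matrix $U$ yields $\ker_\CC(UZ') \cap \mathcal{E}_\CC \neq \emptyset$. Whenever $U$ has rank $6$, the product $UZ'$ is a $6 \times 9$ matrix of rank $6$, so $\ker_\CC(UZ')$ is a $2$-plane in $\PP_\CC^8$ and the defining property of the Chow form gives $\textup{Ch}_{\mathcal{E}_\CC}(UZ') = 0$. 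Since the rank-$6$ matrices are Zariski dense among $6\times 7$ matrices, $\textup{Ch}_{\mathcal{E}_\CC}(UZ')$ vanishes identically as a polynomial in the entries of $U$, hence each of its coefficients, namely each $p \in \mathcal{P}$ evaluated at $Z'$, is zero.

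For the converse, suppose $p(Z') = 0$ for every $p \in \mathcal{P}$, so that $\textup{Ch}_{\mathcal{E}_\CC}(UZ')$ is the zero polynomial in $U$. Then for every rank-$6$ matrix $U$ the $2$-plane $\ker_\CC(UZ') \supseteq \ker_\CC(Z')=\ell$ meets $\mathcal{E}_\CC$. As $U$ varies over rank-$6$ $6 \times 7$ matrices (a dense open subset of $\CC^{6\times 7}$), the planes $\ker_\CC(UZ')$ sweep out \emph{every} $2$-plane in $\PP_\CC^8$ that contains the line $\ell$, because any such plane is cut out by a $6$-dimensional space of linear forms vanishing on $\ell$, which in turn lift (via $Z'$, whose kernel is exactly $\ell$) to a unique $6$-dimensional subspace of row-combinations of $Z'$. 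Thus every $2$-plane containing $\ell$ meets $\mathcal{E}_\CC$.

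It remains to deduce $\ell \cap \mathcal{E}_\CC \neq \emptyset$. Suppose for contradiction that $\ell$ is disjoint from $\mathcal{E}_\CC$. Then the linear projection $\pi_\ell : \PP_\CC^8 \dashrightarrow \PP_\CC^6$ with center $\ell$ restricts to a regular morphism on $\mathcal{E}_\CC$, and its image satisfies $\dim \pi_\ell(\mathcal{E}_\CC) \leq \dim \mathcal{E}_\CC = 5 < 6$. Hence $\pi_\ell(\mathcal{E}_\CC)$ is a proper subvariety of $\PP_\CC^6$, so there exists $q \in \PP_\CC^6 \setminus \pi_\ell(\mathcal{E}_\CC)$; the unique $2$-plane through $\ell$ that projects to $q$ is then a $2$-plane containing $\ell$ and disjoint from $\mathcal{E}_\CC$, contradicting the previous paragraph. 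The main obstacle I anticipate is the bookkeeping in the middle paragraph, namely verifying cleanly that the rank-$6$ matrices $U$ parametrize all $2$-planes through $\ell$ and that the Chow form really vanishes identically in $U$ once it vanishes on this dense locus; the projection step is then a short dimension count using $\dim(\mathcal{E}_\CC)=5$.
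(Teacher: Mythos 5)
Your argument is correct, and the converse direction follows a genuinely different route from the paper's. For the forward direction both you and the paper use the same observation: if the line $\ell = \ker_\CC(Z')$ meets $\mathcal{E}_\CC$, every $2$-plane $\ker_\CC(UZ')\supseteq\ell$ also meets it, so $\textup{Ch}_{\mathcal{E}_\CC}(UZ')$ vanishes on the dense locus of rank-$6$ matrices $U$ and hence identically. For the converse, the paper appeals to Proposition 3.1 of Dalbec--Sturmfels after a projection onto $\PP_\CC^7$ (i.e.\ projecting from a single point of $\ell$ and invoking a general fact about Chow forms). You instead project away the whole line $\ell$ to $\PP_\CC^6$ and run an elementary dimension count: if $\ell\cap\mathcal{E}_\CC=\emptyset$, then $\pi_\ell|_{\mathcal{E}_\CC}$ is a projective morphism with image of dimension at most $5<6$, so some point $q\in\PP_\CC^6$ is missed, and the $2$-plane $\overline{\pi_\ell^{-1}(q)}\supset\ell$ is disjoint from $\mathcal{E}_\CC$, contradicting the hypothesis. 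Your version is self-contained and makes the role of $\dim\mathcal{E}_\CC=5$ (so that $\mathcal{E}_\CC$ cannot surject onto $\PP_\CC^6$) completely transparent; the paper's version is shorter but delegates the key fact to the cited reference. Your bookkeeping in the middle paragraph is also correct: the $6$-dimensional annihilator of a $2$-plane $\Pi\supset\ell$ sits inside the $7$-dimensional annihilator of $\ell$, which is exactly the row space of $Z'$, so $\Pi=\ker_\CC(UZ')$ for some rank-$6$ $U$, and the independence of the conclusion from the choice of $Z'$ follows since every rank-$7$ submatrix has the same kernel $\ell$.
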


\begin{proof}
When $\rank(Z)=7$, we are intersecting $\mathcal{E}_\CC$, which is of codimension three, with a projective line in $\PP_\CC^8$.
We argued above that if a line meets $\mathcal{E}_\CC$ then every plane containing the line also meets $\mathcal{E}_\CC$. Conversely, if every plane containing a line meets  $\mathcal{E}_\CC$ then so does the line. This converse statement follows from Proposition 3.1 in~\cite{dalbec-sturmfels} after a projection onto $\PP_\CC^7$.
\end{proof}

In the case $\rank(Z)=7$, there is a simpler way to express
  the precise conditions under which
$\ker_\CC(Z) \cap \mathcal{E}_\CC \neq \emptyset$. This involves  the Chow form of the
cubic curve $\mathcal{C}$ from Example~\ref{ex:twisted cubic chow form}, and in fact, leads to an algorithm to decide when $Z$ has a (real) essential matrix. We describe the details now.

Suppose that $ u,v\in \RR^9$ is a basis of  ${\rm ker}_\CC(Z)$. Since any element of
$\ker_\CC(Z)$ is of the form $\lambda u + \mu v$ for $\lambda, \mu \in \CC$, we have that
$\ker_\CC(Z) \cap \mathcal{E}_\CC \neq \emptyset$ if and only if
there exist $\lambda, \mu \in \CC$ such that
$$p_j (\lambda u + \mu v )\,\, =\,\, r_{j1} \lambda^3 + r_{j2} \lambda^2 \mu + r_{j3}  \lambda \mu^2 + r_{j4} \mu^3
\,\, = \,\,0 \qquad \hbox{for} \,\,j = 1,2,\ldots,10 $$
where $p_j$ is a cubic in (\ref{eq:tracecubics})-(\ref{eq:det}).
We may express all 10 equations together as
\begin{align} \label{eq:matrix R}
R \omega = 0 \,\,\,\textup{ where } R = (r_{jl}) \in \RR^{10\times 4}
\end{align}
and $\omega :=
\begin{bmatrix} \lambda^3 \!&\! \lambda^2 \mu \!&\! \lambda \mu^2 \!&\! \mu^3 \end{bmatrix}^\top$
parametrizes the cubic curve $\mathcal{C} \subset \PP_\CC^3 $ in Example~\ref{ex:twisted cubic chow form}.
This leads to the following useful facts.

\begin{lemma} \label{lem:twisted}
\begin{enumerate}
\item
$\ker_\CC(Z) \cap \mathcal{E}_\CC \neq \emptyset$ if and only if
$\dis
{\rm ker}_{\CC}(R) \cap  \mathcal{C} \neq \emptyset
$.
\item
Suppose $\rank(R) = 2$ and let $R'$ be a $2 \times 4$ submatrix of $R$ that has rank $2$.
 Then ${\rm ker}_{\CC}(R) \cap \mathcal{C} \neq \emptyset $ if and only if
${\rm Ch}_{\mathcal{C}} (R') = 0$.
\end{enumerate}
\end{lemma}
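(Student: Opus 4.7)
The plan is to unpack Definitions and Example~\ref{ex:twisted cubic chow form}; both parts will follow almost formally from the construction of $R$.

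For part (1), I would start from the basis $\{u,v\}$ of $\ker_\CC(Z)$: every element of $\ker_\CC(Z)\setminus\{0\}$ has the form $\lambda u+\mu v$ with $(\lambda,\mu)\in\CC^2\setminus\{0\}$, and two such elements represent the same point of $\PP^8_\CC$ iff the corresponding $(\lambda,\mu)$ agree in $\PP^1_\CC$. Since each Demazure cubic $p_j$ is a homogeneous form of degree $3$, the expansion
\[
 p_j(\lambda u+\mu v)\;=\;r_{j1}\lambda^3+r_{j2}\lambda^2\mu+r_{j3}\lambda\mu^2+r_{j4}\mu^3
\]
defines the entries of $R$; this is already how $R$ was introduced in \eqref{eq:matrix R}. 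With $\omega=(\lambda^3,\lambda^2\mu,\lambda\mu^2,\mu^3)^\top$, the ten Demazure equations on $\lambda u+\mu v$ collapse to the single matrix equation $R\omega=0$.

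Now I would argue the two directions. If $\lambda u+\mu v\in\ker_\CC(Z)\cap\mathcal{E}_\CC$, then $(\lambda,\mu)\neq(0,0)$, so the associated $\omega$ is a nonzero point of $\mathcal{C}$ with $R\omega=0$; hence $\omega\in\ker_\CC(R)\cap\mathcal{C}$. Conversely, any point of $\mathcal{C}$ has the form $\omega(\lambda,\mu)$ for some $(\lambda,\mu)\in\PP^1_\CC$; if in addition $R\omega=0$, then $\lambda u+\mu v$ is a nonzero vector (as $u,v$ are linearly independent and $(\lambda,\mu)\neq 0$) satisfying $p_j(\lambda u+\mu v)=0$ for all $j$, placing it in $\ker_\CC(Z)\cap\mathcal{E}_\CC$.

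For part (2), since $\rank(R)=2$, any $2\times 4$ submatrix $R'$ of $R$ whose two rows span the row space of $R$ satisfies $\ker_\CC(R')=\ker_\CC(R)$, which is a projective line in $\PP^3_\CC$. The defining property of the Chow form ${\rm Ch}_\mathcal{C}$ recalled in Example~\ref{ex:twisted cubic chow form} says that for a $2\times 4$ matrix $A$, one has ${\rm Ch}_\mathcal{C}(A)=0$ iff $\ker_\CC(A)\cap\mathcal{C}\neq\emptyset$. Applying this to $A=R'$ and combining with the identification of kernels yields ${\rm Ch}_\mathcal{C}(R')=0 \iff \ker_\CC(R)\cap\mathcal{C}\neq\emptyset$, as required.

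There is no serious obstacle here: both statements are structural consequences of the construction of $R$ together with the Chow-form description of lines meeting the twisted cubic. The only point requiring mild care is the projective bookkeeping—making sure that on the $(\lambda,\mu)$ side the excluded origin matches the excluded origin on the $\omega$ side, which is automatic because the Veronese map $(\lambda,\mu)\mapsto\omega$ is a well-defined injection $\PP^1_\CC\hookrightarrow\mathcal{C}\subset\PP^3_\CC$.
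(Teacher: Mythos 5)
Your proposal is correct and follows the same route the paper takes: the paper does not give a separate proof block for this lemma but establishes it implicitly by the construction of $R$ and the expansion $p_j(\lambda u+\mu v)=r_{j1}\lambda^3+r_{j2}\lambda^2\mu+r_{j3}\lambda\mu^2+r_{j4}\mu^3$ in the paragraph preceding it, together with the defining property of ${\rm Ch}_\mathcal{C}$ from Example~\ref{ex:twisted cubic chow form}. You have merely made explicit the projective bookkeeping (the Veronese $\PP^1_\CC\hookrightarrow\mathcal{C}$ and the identification $\ker_\CC(R')=\ker_\CC(R)$ when $R'$ has rank two) that the paper leaves to the reader.
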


We can now use the above lemma to decide if $Z$ has a (complex) essential matrix.

\begin{corollary} \label{cor:rank7complexE}
We have $\ker_\CC(Z) \cap \mathcal{E}_\CC \neq \emptyset$ if and only if
one of the following holds:
\begin{enumerate}
\item $\rank(R) \leq 1$.
\item $\rank(R) =2$ and   ${\rm Ch}_{\mathcal{C}} (R') = 0$ for any $2\times 4$ submatrix $R'$ of $R$.
\item $\rank(R) = 3$ and the unique point $\omega \in \ker_\CC(R)  \subset \PP_\CC^3 $
satisfies
\begin{equation}
\label{eq:TwoByThree}
 {\rm rank} \begin{bmatrix}  \omega_1  & \omega_2 & \omega_3 \\
                                                  \omega_2  & \omega_3 & \omega_4
\end{bmatrix}  \,\,\,= \,\,\, 1.
\end{equation}
\end{enumerate}
\end{corollary}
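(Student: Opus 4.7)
The plan is to invoke Lemma~\ref{lem:twisted}(1), which reduces the problem to deciding whether $\ker_\CC(R)$ meets the twisted cubic $\mathcal{C} \subset \PP_\CC^3$, and then to do a case analysis on $r := \rank(R)$. Since $R \in \RR^{10 \times 4}$, the linear subspace $\ker_\CC(R) \subset \PP_\CC^3$ has projective dimension $3-r$, and we dispose of each value of $r$ separately.

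For $r = 0$, we have $R = 0$ and $\ker_\CC(R) = \PP_\CC^3 \supset \mathcal{C}$, so the intersection is non-empty. For $r = 1$, $\ker_\CC(R)$ is a projective plane in $\PP_\CC^3$; since $\dim(\ker_\CC(R)) + \dim(\mathcal{C}) = 2 + 1 = 3$, Theorem~\ref{thm:intersections} applied to the irreducible varieties $\ker_\CC(R)$ and $\mathcal{C}$ guarantees that their intersection is non-empty. This verifies that case~(1) of the statement is a sufficient (and, as $\rank(R) \le 1$, obviously necessary) condition.

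For $r = 2$, $\ker_\CC(R)$ is a projective line in $\PP_\CC^3$, so Lemma~\ref{lem:twisted}(2) directly supplies the equivalence with ${\rm Ch}_{\mathcal{C}}(R') = 0$ for any $2 \times 4$ submatrix $R'$ of $R$ of rank two; this is case~(2). For $r = 3$, $\ker_\CC(R)$ is a single point $\omega \in \PP_\CC^3$, and the question becomes whether $\omega \in \mathcal{C}$. The twisted cubic is the standard determinantal variety cut out by the three $2 \times 2$ minors of the catalecticant matrix in~\eqref{eq:TwoByThree}, so $\omega \in \mathcal{C}$ if and only if that $2 \times 3$ matrix has rank at most one; this is case~(3). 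Finally, if $r = 4$, then $\ker_\CC(R) = \emptyset$, so the intersection is empty and none of (1), (2), (3) holds, which confirms that these three conditions together characterize non-emptiness exactly.

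There is no real obstacle here: the corollary is essentially a catalog of how a linear subspace of $\PP_\CC^3$ of each possible dimension can meet the fixed curve $\mathcal{C}$, and each case is either immediate from Lemma~\ref{lem:twisted}, from Theorem~\ref{thm:intersections}, or from the defining equations of the twisted cubic. The only point requiring mild care is confirming that the rank-one catalecticant condition in~\eqref{eq:TwoByThree} really does cut out $\mathcal{C}$ set-theoretically in $\PP_\CC^3$, which is standard.
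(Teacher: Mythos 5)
Your proof is correct and follows essentially the same route as the paper: reduce via Lemma~\ref{lem:twisted}(1) to the intersection of $\ker_\CC(R)$ with the twisted cubic $\mathcal{C}$ in $\PP_\CC^3$, then stratify by $\rank(R)$, using Theorem~\ref{thm:intersections} for $\rank(R) \le 1$, Lemma~\ref{lem:twisted}(2) for $\rank(R)=2$, and the fact that the $2\times 2$ minors of the catalecticant cut out $\mathcal{C}$ for $\rank(R)=3$. The only cosmetic difference is that you split $\rank(R)\le 1$ into $r=0$ and $r=1$ and treat $r=4$ explicitly at the end, whereas the paper dispatches the $\rank(R)\le 1$ case in one line and opens by noting $\rank(R)\le 3$ is necessary.
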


\begin{proof} By Lemma~\ref{lem:twisted} (1), a necessary condition for $\ker_\CC(Z) \cap \mathcal{E}_\CC \neq \emptyset$  is that $\rank(R) \leq 3$. If $\rank(R) \leq 1$, then $\ker_\CC(R)$ has dimension at least two as a projective variety and hence will intersect $\mathcal{C}$ by Theorem~\ref{thm:intersections}. If $\rank(R)=2$ then we invoke Lemma~\ref{lem:twisted} (2). If $\rank(R)=3$, then $\ker_\CC(R)$ consists of a unique point $\omega \in  \PP_\CC^3$ and $\ker_\CC(Z) \cap \mathcal{E}_\CC \neq \emptyset$  if and only if this $\omega \in \mathcal{C}$.
The $2 \times 2$ minors of the matrix in \eqref{eq:TwoByThree}
are the polynomials that cut out $\mathcal{C}$, and hence, $\omega \in \mathcal{C}$ if and only if the rank condition in \eqref{eq:TwoByThree} holds.
\end{proof}

In fact, the following algorithm will determine if $Z$ has a (real) essential matrix.
\newpage
\begin{algorithm}
\caption{Existence check for an essential matrix when $\rank(Z)=7$}
\begin{algorithmic}
\Require{$Z \in \RR^{m \times 9}, \,\,\rank(Z)=7, \,\, \{u,v\} \subset \RR^9 \textup{ basis of } \ker_\CC(Z) $}
\State{Compute the $10 \times 4$ matrix $R$ in \eqref{eq:matrix R}.}

\If {$\rank(R)=1$}
\State{Return True}
\EndIf

\If {$\rank(R)=4$}
\State{Return False}
\EndIf

\If {$\rank(R)=3$}
  \State{Compute $\omega \in \ker_\RR(R)$}
  \If {$\rank \begin{bmatrix}  \omega_1  & \omega_2 & \omega_3 \\
                                                  \omega_2  & \omega_3 & \omega_4
                   \end{bmatrix}  \,\,\,\,= \,\,\, 1 $}
  \State{Return True}
  \EndIf
    \State{Return False}

\EndIf

\If {$\rank(R)=2$}
     \State{Pick two independent rows $r_i,r_j$ of $R$ and set up the B{\'e}zout matrix $B$ in \eqref{eq:Bezout}}
         \If {$\rank(B) = 2$}
         \State{Return True}
         \EndIf

           \State{
           $at^2 + bt +c = \operatorname{gcd}(
        r_{i1} t^3 + r_{i2}t^2 + r_{i3}t  + r_{i4},\,\,\,
        r_{j1} t^3 + r_{j2}t^2 + r_{j3}t  + r_{j4})$
        }
              \If {$b^2-4ac \geq 0$}
              \State{Return True}
              \EndIf
              \State{Return False}
\EndIf

 \end{algorithmic}
\label{algo:existenceOfErankZ=7}
\end{algorithm}

\noindent{\em Proof of correctness of Algorithm~\ref{algo:existenceOfErankZ=7}}:
Consider the three cases in Corollary~\ref{cor:rank7complexE}.
\begin{enumerate}
\item If $\rank(R)= 1$ then $\ker_\CC(R) \cap \mathcal{C} \neq \emptyset$ if and only if
there exists $\lambda, \mu \in \CC$ such that $r_{i1} \lambda^3 + r_{i2} \lambda^2 \mu + r_{i3} \lambda \mu^2 + r_{i4} \mu^3 = 0$ where $(r_{i1},r_{i2},r_{i3},r_{i4})$ is a non-zero row of $R$. Setting $\mu = 1$, we obtain
a univariate cubic equation, which always has a real root $\lambda$. Since the basis vectors $u,v \in \RR^9$ of $\ker_\CC(Z)$ were real, we get that the real point $\lambda u + \mu v$ satisfies all the Demazure equations. Therefore, $Z$ has an essential matrix.

\item If $\rank(R)=2$, by the same argument as above, we  have two univariate cubics
that share a root (coming from two independent equations in $R \omega = 0$).
If that root is unique then it is real. (This happens
when the $3 \times 3$  B\'ezout matrix in  \eqref{eq:Bezout} has rank exactly $2$.)
Otherwise, the gcd of our two univariate cubics is a quadratic, and we check the sign of its discriminant to
determine if there is a real root.

\item If $\rank(R) = 3$ then the unique $\omega$ in $\ker_\CC(R)$ can be chosen
to have real coordinates, and ditto for its preimage $(\lambda,\mu)$
under the parametrization of the curve $\mathcal{C}$. So again, $Z$ has an essential matrix.
\end{enumerate}
\hfill $\Box$

\section{Discussion}
\label{sec:discussion}
The study of fundamental and essential matrices has a long history in computer vision and photogrammetry~\cite{hartley-zisserman-2003,sturm11}.  However, this literature has primarily been focused on a few topics:
the existence problem in the {\em minimal} case (i.e., $m=5$ for essential matrices and $m=7$ for fundamental matrices),
estimation methods, and degenerate configurations in which these estimation methods break down.

We have presented a complete existence result for fundamental matrices. It describes a set of polynomial equations and inequations that must be satisfied for a fundamental matrix to exist,
and it establishes a polynomial time algorithm (in $m$) for performing this check. It is a constructive (though not necessarily a numerically stable) procedure. An important point in our
analysis is the characterization of the non-trivial gap between the constraints $\rank(F) = 2$ and $\det(F) = 0$.

For essential matrices, we gave a complete answer for $\rank(Z) \leq 3$ and $\rank(Z) \geq 7$. For $4 \le \rank(Z) \le 6$ we
tackle the first step of finding complex solutions to the Demazure polynomials, but more work needs to be done to determine the existence of essential matrices which are real solutions to the Demazure cubics. Also, a more
explicit construction of the Chow form of $\mathcal{E}_\CC$ needs to be worked out to make our checks constructive.

Going forward there are a number of avenues for future work.
The first, of course, is to complete the characterization of essential matrices for all ranks of $Z$. Another direction
is  the analogous existence question for trifocal and quadrifocal tensors.

The results presented in this paper are mostly in terms of the null space of the matrix $Z$. It is straightforward to translate them in terms of $X$ and $Y$, but this does not pay any attention to the fact that each row of $Z$ is a Kronecker product of the corresponding rows of $X$ and $Y$. We believe that exploiting this significant structure will lead to new and simpler forms of our results.
Last but not the least, we have ignored {\em chirality}, i.e., the constraint that the correspondences can be triangulated to 3D points that lie in front of the cameras. This is an interesting problem and will likely require significantly different techniques to deal with its semialgebraic nature.

\bibliographystyle{apalike}
\bibliography{algebraic_vision}
\end{document}